\documentclass[sigconf, nonacm]{acmart}

\AtBeginDocument{%
  \providecommand\BibTeX{{%
    \normalfont B\kern-0.5em{\scshape i\kern-0.25em b}\kern-0.8em\TeX}}}

\copyrightyear{2021}
\acmYear{2021}
\setcopyright{acmcopyright}
\acmConference[KDD '21] {Proceedings of the 27th ACM SIGKDD Conference on Knowledge Discovery and Data Mining}{August 14--18, 2021}{Virtual Event, Singapore.}
\acmBooktitle{Proceedings of the 27th ACM SIGKDD Conference on Knowledge Discovery and Data Mining (KDD '21), August 14--18, 2021, Virtual Event, Singapore}
\acmPrice{15.00}
\acmISBN{978-1-4503-8332-5/21/08}
\acmDOI{10.1145/3447548.3467251}
\usepackage{color}
\newtheorem{definition}{Definition}
\newtheorem{theorem}{Theorem}
\newtheorem{proposition}{Proposition}
\usepackage[ruled,vlined]{algorithm2e}

\usepackage{graphicx}
\usepackage{subfigure}
\usepackage{enumitem}
\usepackage{multirow}
\usepackage{amsthm,amsmath}
\usepackage{multicol}
\settopmatter{printacmref=false}
\renewcommand\footnotetextcopyrightpermission[1]{}
\newcommand{\sen}[1]{{\color{black}{#1}}}
\newcommand{\pan}[1]{{\color{black}{#1}}}

\newcommand\blfootnote[1]{%
  \begingroup
  \renewcommand\thefootnote{}\footnote{#1}%
  \addtocounter{footnote}{-1}%
  \endgroup
}


\settopmatter{printacmref=true}
\begin{document}
	\fancyhead{}

\title{Towards Model-Agnostic Post-Hoc Adjustment for Balancing Ranking Fairness and Algorithm Utility}



\author{Sen Cui$^{1*}$, Weishen Pan$^{1*}$, Changshui Zhang$^{1}$, Fei Wang$^2$}
\affiliation{%
  \institution{$^1$Institute for Artificial Intelligence, Tsinghua University (THUAI), State Key Lab of Intelligent Technologies and Systems,Beijing National Research Center for Information Science and Technology (BNRist) \\ Department of Automation, Tsinghua University, Beijing, P.R.China\\$^2$Department of Population Health Sciences, Weill Cornell Medicine, USA}
  \country{}
}
\email{{cuis19, pws15}@mails.tsinghua.edu.cn, zcs@mail.tsinghua.edu.cn,few2001@med.cornell.edu}

\renewcommand{\authors}{Sen Cui, Weishen Pan, Changshui Zhang, Fei Wang}

\begin{abstract}
Bipartite ranking, which aims to learn a scoring function that ranks positive individuals higher than negative ones from labeled data, is widely adopted in various applications where sample prioritization is needed. Recently, there have been rising concerns on whether the learned scoring function can cause systematic disparity across different protected groups defined by sensitive attributes. While there could be trade-off between fairness and performance, in this paper we propose a model agnostic post-processing framework for balancing them in the bipartite ranking scenario. Specifically, we maximize a weighted sum of the utility and fairness by directly adjusting the relative ordering of samples across groups. By formulating this problem as the identification of an optimal warping path across different protected groups, we propose a non-parametric method to search for such an optimal path through a dynamic programming process. Our method is compatible with various classification models and applicable to a variety of ranking fairness metrics. Comprehensive experiments on a suite of benchmark data sets and two real-world patient electronic health record repositories show that our method can achieve a great balance between the algorithm utility and ranking fairness. Furthermore, we experimentally verify the robustness of our method when faced with the fewer training samples and the difference between training and testing ranking score distributions. \blfootnote{$*$ Equal contributions from both authors.}
\end{abstract}


\begin{CCSXML}
<ccs2012>
   <concept>
       <concept_id>10010147.10010257.10010258.10010259.10003268</concept_id>
       <concept_desc>Computing methodologies~Ranking</concept_desc>
       <concept_significance>500</concept_significance>
       </concept>
   <concept>
       <concept_id>10003752.10003809</concept_id>
       <concept_desc>Theory of computation~Design and analysis of algorithms</concept_desc>
       <concept_significance>500</concept_significance>
       </concept>
   <concept>
       <concept_id>10003752.10010070</concept_id>
       <concept_desc>Theory of computation~Theory and algorithms for application domains</concept_desc>
       <concept_significance>300</concept_significance>
       </concept>
 </ccs2012>
\end{CCSXML}

\ccsdesc[500]{Computing methodologies~Ranking}
\ccsdesc[500]{Theory of computation~Design and analysis of algorithms}
\ccsdesc[300]{Theory of computation~Theory and algorithms for application domains}

\keywords{ranking fairness, model agnostic}

\maketitle

\section{Introduction}
Machine learning algorithms have been widely applied in a variety of real-world applications including the high-stakes scenarios such as loan approvals, criminal justice, healthcare, etc. An increasing concern is whether these algorithms make fair decisions in these cases. For example, ProPublica reported that an algorithm used across the US for predicting a defendant’s risk of future crime produced higher scores to African-Americans than Caucasians on average~\cite{angwin2016machine}. This stimulates lots of research on improving the fairness of the decisions made by machine learning algorithms.

Existing works on fairness in machine learning have mostly focused on the disparate impacts of binary decisions informed by algorithms with respect to different groups formed from the protected variables (e.g., gender or race). Demographic parity requires the classification results to be independent of the group memberships. Equalized odds~\cite{hardt2016equality} seeks for equal false positive and negative rates across different groups. Accuracy parity~\cite{zafar2017fairness} needs equalized error rates across different groups.

Another scenario that frequently involves computational algorithms is ranking. For example, Model for End-stage Liver Disease (MELD) score, which is derived from a simple linear model from several features, has been used for prioritizing candidates who need liver transplantation~\cite{wiesner2003model}. Studies have found that women were less likely than men to receive a liver transplant within 3 years with the MELD score~\cite{moylan2008disparities}. To quantify ranking fairness, Kallus {\em et al.}~\cite{kallus2019fairness} proposed \emph{xAUC}, which measures the probability of positive examples of one group being ranked above negative examples of another group. Beutel {\em et al.}~\cite{beutel2019fairness} proposed a similar definition \emph{pairwise
	ranking fairness} (PRF), which requires equal probabilities for positive instances from each group ranked above all negative instances.

To address the potential disparity induced from risk scores, Kallus {\em et al.}~\cite{kallus2019fairness} proposed a post-processing approach that adjusts the risk scores of {the instances in the disadvantaged group} with a parameterized monotonically increasing function. This method is model agnostic and aims to achieve equal xAUC, but it does not consider algorithm utility (i.e., AUC) explicitly. Beutel {\em et al.}~\cite{beutel2019fairness} studied the balance between algorithm utility and ranking fairness and proposed an optimization framework by minimizing an objective including the classification loss and a regularization term evaluating the absolute correlation between the group membership and pairwise residual predictions. Though this method considers both utility and fairness, is model-dependent and does not directly optimize PRF disparity but an approximated proxy.

In this paper, we develop a model agnostic post-processing framework, \texttt{xOrder}, to achieve ranking fairness and maintain the algorithm utility. Specifically, we show that both algorithm utility and ranking fairness are essentially determined by the ordering of the instances involved. \texttt{xOrder} makes direct adjustments of the cross-group instance ordering (while existing post-processing algorithms mostly aimed at adjusting the ranking scores to optimize the ordering). The optimal adjustments can be obtained through a dynamic programming procedure of minimizing an objective comprising a weighted sum of algorithm utility loss and ranking disparity. We theoretically analyze our method in two cases. If we focus on maximizing the utility, \texttt{xOrder} achieves a global optimal solution. While we care only about minimizing the disparity, it can have a relatively low bound of ranking disparity. The learned ordering adjustment can be easily transferred to the test data through linear interpolation.

We evaluate \texttt{xOrder} empirically on four popular benchmark data sets for studying algorithm fairness and two real-world electronic health record data repositories. The results show \texttt{xOrder} can achieve low ranking disparities on all data sets while at the same time maintaining good algorithm utilities. In addition, we compare the performance of \texttt{xOrder} with another post-processing algorithm when faced with the difference between training and test distributions. From the results, we find our algorithm can achieve robust performance when training and test ranking score distributions are significantly different. The source codes of \texttt{xOrder} are made publicly available at \url{https://github.com/cuis15/xorder}.


\section{Related Works}
Algorithm fairness is defined as the disparities in the decisions made across groups formed by protected variables, such as gender and race. Many previous works on this topic focused on binary decision settings. Researchers have used different proxies as fairness measures which are required to be the same across different groups for achieving fairness. Examples of such proxies include the proportion of examples classified as positive~\cite{calders2009building,calders2010three}, as well as the prediction performance metrics such as true/false positive rates and error rates~\cite{dixon2018measuring,feldman2015certifying,hardt2016equality,zafar2017fairness,kallus2018residual}. A related concept that is worthy of mentioning here is calibration~\cite{lichtenstein1981calibration}. A model with risk score $\operatorname{S}$ on input $\operatorname{X}$ to generate output $\operatorname{Y}$ is considered calibrated by group if for $\forall s\in[0,1]$, we have $\operatorname{Pr}(\operatorname{Y} = 1| \operatorname{S} = s, \operatorname{A} = a) = \operatorname{Pr}(\operatorname{Y} = 1| \operatorname{S} = s, \operatorname{A} = b)$ where $\mathrm{A}$ is the group variable~\cite{chouldechova2017fair}. Recent studies have shown that it is impossible to satisfy both error rate fairness and calibration simultaneously when the prevalence of positive instances are different across groups~\cite{kleinberg2016inherent,chouldechova2017fair}. Plenty of approaches have been proposed to achieve fairness in binary classification settings. One type of method is to train a classifier without any adjustments and then post-process the prediction scores by setting different thresholds for different groups~\cite{hardt2016equality}. Other methods have been developed for optimization of fairness metrics during the model training process through adversarial learning~\cite{zemel2013learning,louizos2015variational,beutel2017data,madras2018learning,zhang2018mitigating} or regularization~\cite{kamishima2011fairness,zafar2015fairness,beutel2019putting}.

Ranking fairness is an important issue in applications where the decisions are made by algorithm produced ranking scores, such as the example of liver transplantation candidate prioritization with MELD score~\cite{wiesner2003model}. This problem is related to but different from binary decision making~\cite{narasimhan2013relationship,menon2016bipartite}. There are prior works formulating this problem in the setting of selecting the top-k items ranked based on the ranking scores for any k~\cite{celis2017ranking,yang2017measuring,zehlike2017fa,geyik2019fairness}. For each sub-problem with a specific k, the top-k ranked examples can be treated as positive while the remaining examples can be treated as negative, so that these sub-problems can be viewed as binary classification problems. There are works trying to assign a weight to each instance according to the orders and study the difference of such weights across different groups~\cite{singh2018fairness,singh2019policy}. Our focus is the fairness on bipartite ranking, which seeks for a good ranking function that ranks positive instances above negative ones~\cite{menon2016bipartite}. Kallus {\em et al}.~\cite{kallus2019fairness} defined xAUC (Area under Cross-Receiver Operating Characteristic curve) as the probability of positive examples of one group being ranked above negative examples of another group. They require equal xAUC to achieve ranking fairness. Beutel {\em et al.} proposed a similar definition of pairwise ranking fairness(PRF) as the probability that positive examples from one group are ranked above all negative examples~\cite{beutel2019fairness} and use the difference of PRF across groups as a ranking fairness metric. They further proved that some traditional fairness metrics (such as calibration and MSE) are insufficient for guaranteeing ranking fairness under PRF metric.

To address ranking fairness problem, Kallus {\em et al}.~\cite{kallus2019fairness} proposed a post-processing technique. They transformed the prediction scores in the disadvantaged group with a logistic function and optimized the empirical xAUC disparity by exhaustive searching on the space of parameters without considering the trade-off between algorithm utility and fairness. As the objective of the ranking problem is non-differentiable, there are theoretical and empirical works which propose to apply a differentiable objective to approximate the original non-differentiable objective~\cite{vogel2020learning} ~\cite{beutel2019fairness} ~\cite{narasimhan2020pairwise}.  Vogel {\em et al}. propose to use a logistic function as smooth surrogate relaxations and provide upper bounds of the difference between the global optima of the original and the relaxed objectives. Narasimhan {\em et al.} reduced ranking problems to constrained optimization problems and proposed to solve the problems by applying an existed optimization framework proposed in ~\cite{cotter2019two}. Beutel {\em et al.} proposed a pairwise regularization for the objective function~\cite{beutel2019fairness}. The regularization is computed as the absolute correlation between the residual prediction scores of the positive and negative example and the group membership of the positive example. However, PRF disparity is determined by judging whether a positive example is ranked above a negative one using an indicator function. The proposed pairwise regularization can be seen as an approximation of PRF disparity by replacing the indicator function with the residual prediction scores. This regularization does not guarantee ranking fairness under PRF metric. Moreover, it is difficult to apply this regularization to some learning methods such as boosting model~\cite{freund2003efficient}. If we apply fairness regularization proposed by Beutal \emph{et al.} to boosting model, it is challenging to reweight the samples during the boosting iterations, because the impact of increasing/decreasing the weight of the samples on fairness is difficult to control.

\begin{figure*}[htbp]
	\setlength{\abovecaptionskip}{0.2cm}
	\setlength{\belowcaptionskip}{-0.2cm}
	\center{
		\includegraphics[width=1.7\columnwidth]{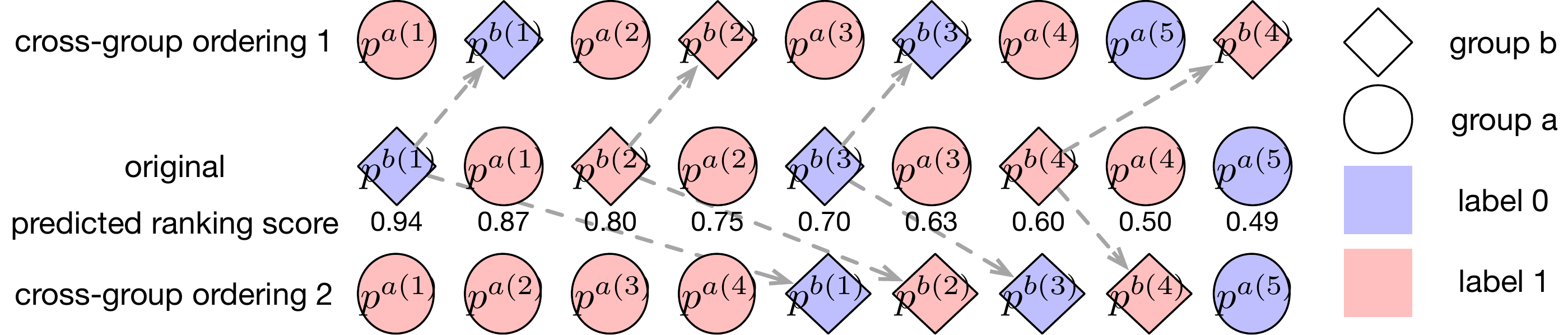}}
	\caption{An example to illustrate the post-processing. Original scores are in the middle. The first row is the ordering after post-processing, while the optimal ordering is on the bottom.}
	\label{samples}
\end{figure*}

\section{Notations and Problem Settings}
Suppose we have data ($\operatorname{X}$, $\operatorname{A}$, $\operatorname{Y}$) on features $\operatorname{X} \in \mathcal{X}$, sensitive attribute $\operatorname{A} \in \mathcal{A}$ and binary label $\operatorname{Y} \in\{0,1\}$. We are interested in the performance and the fairness issue of a predictive ranking score function $R$: $\mathcal{X} \times \mathcal{A} \rightarrow \mathbb{R}$. Here we focus on the case where $R$ returns an estimated conditional probability positive label corresponding to a given individual's ranking score. We use $ \mathrm{S}=R(\operatorname{X}, \operatorname{A})\in[0,1]$ to denote the individual ranking score variable and the score $\mathrm{S}$ means the probability that the individual belongs to positive group.

Given $\operatorname{S}$, we can derive a binary classifier with a given threshold $\theta$, such that $\hat{\operatorname{Y}}_{\theta}=\mathbb{I}[ \operatorname{S} \geq \theta]$ and $\mathbb{I}$ is the indicator function. To evaluate the performance of $R$, the receiver operator characteristic (ROC) curve is widely adopted with the false positive rate (FPR) on x-axis and the true positive rate (TPR) on y-axis as the threshold $\theta$ varies. The area under the ROC curve (AUC) quantitatively measures the quality of the learned scoring function $R$.

AUC can also be understood as the probability that a randomly drawn ranking score from the positive class is ranked above a randomly drawn score from the negative class~\cite{hanley1982meaning}

\begin{equation}
\label{eq:AUC}
\begin{aligned}
\mathrm{AUC} &= \operatorname{Pr}[\operatorname{S}_{1}>\operatorname{S}_{0}]\\
&= \frac{1}{n_{1} n_{0}} \cdot \sum\nolimits_{i:\mathrm{Y}_i=1}\sum\nolimits_{j:\mathrm{Y}_j=0}\mathbb{I}\left[R(\operatorname{X}_{i})>R(\operatorname{X}_{j})\right],
\end{aligned}
\end{equation}

where $\operatorname{S}_1$ and $\operatorname{S}_0$ represent a ranking score of a random positive and negative sample. $n_1$ and $n_0$ correspond to the number of positives and negatives, respectively. Note that we dropped the group variable in the $R$ function because it is irrelevant to the measure of AUC (i.e., $\mathrm{X}_i$, $\mathrm{X}_i$ can be from any groups). The two group-level ranking fairness metrics can be measured by the following Cross-Area Under the Curve (xAUC) metric~\cite{kallus2019fairness}.

\begin{definition}[xAUC~\cite{kallus2019fairness}] The xAUC of group $a$ over $b$ is defined as
	\begin{equation}
	\begin{aligned}
	\label{eq:xAUC}
	&\mathrm{xAUC}(a, b)=\operatorname{Pr}\left[\operatorname{S}_{1}^{a}>\operatorname{S}_{0}^{b}\right]\\
	&= \frac{1}{n^{a}_{1} n^{b}_{0}} \sum_{i:i\in a,\mathrm{Y}_{i}=1}\sum_{j: j\in b, \mathrm{Y}_{j}=0}\mathbb{I} \left[R(\operatorname{X}_{i},a) > R(\operatorname{X}_{j},b)\right],
	\end{aligned}
	\end{equation}
	where $a$ and $b$ are two groups formed by the sensitive variable $\mathrm{A}$. $\operatorname{S}_{1}^{a}$ is the ranking score of a random positive sample in $a$. $\operatorname{S}_{0}^{b}$ is the ranking score of a random negative sample in $b$. $n^a_1$ and $n^b_0$ correspond to the number of positives in a and negatives in b, respectively. $i$ is the index of a particular positive sample from group $a$, whose corresponding ranking score is $R(\operatorname{X}_{i},a)$. $j$ is the index of a particular negative sample from group $b$, whose corresponding ranking score is $R(\operatorname{X}_{j},b)$.
\end{definition}
From Eq.(\ref{eq:xAUC}) we can see that xAUC measures the probability of a random positive sample in $a$ ranked higher than a random negative sample in $b$. Correspondingly, xAUC($b$,$a$) means $\operatorname{Pr}(\operatorname{S}_{1}^{b}>\operatorname{S}_{0}^{a})$, and the ranking disparity can be measured by
\begin{equation}
\label{eq:delta_xAUC}
\begin{aligned}
\Delta \mathrm{xAUC}(a,b) &=\left|\operatorname{xAUC}(a,b)-\operatorname{xAUC}(b,a)\right|\\
&=\left|\operatorname{Pr}\left(\operatorname{S}_{1}^{a}>\operatorname{S}_{0}^{b}\right)-\operatorname{Pr}\left(\operatorname{S}_{1}^{b}>\operatorname{S}_{0}^{a}\right)\right|.
\end{aligned}
\end{equation}

\begin{definition}[Pairwise Ranking Fairness (PRF)~\cite{beutel2019fairness}] The PRF for group $a$ is defined as
	\begin{equation}
	\label{eq:pairwise_ranking}
	\begin{aligned}
	\mathrm{PRF}(a) &=\operatorname{Pr}[ \operatorname{S}_{1}^{a}>\operatorname{S}_{0}]\\
	&=\frac{1}{n^{a}_{1} \cdot n_{0}} \sum_{i:i\in a,\mathrm{Y}_{i}=1 }\sum_{j:\mathrm{Y}_{j}=0}\mathbb{I} \left[R(\operatorname{X}_{i},a) > R(\operatorname{X}_{j})\right],
	\end{aligned}
	\end{equation}
	where sample $j$ can belong to either group $a$ or group $b$.
\end{definition}


From Eq.(\ref{eq:pairwise_ranking}) we can see that the PRF for group $a$ measures the probability of a random positive sample in $a$ ranked higher than a random negative sample in either $a$ or $b$. Then we can also define the following $\Delta$PRF metric to measure the ranking disparity
\begin{equation}
\label{eq:deltaPRF}
\Delta\mathrm{PRF}(a,b)=\left|\operatorname{Pr}[ \operatorname{S}_{1}^{a}>\operatorname{S}_{0}] - \operatorname{Pr}[\operatorname{S}_{1}^{b}>\operatorname{S}_{0}]\right|.
\end{equation}

From above definitions we can see the utility (measured by AUC as in Eq.(\ref{eq:AUC})) and fairness (measured by $\Delta$xAUC in Eq.(\ref{eq:xAUC}) or $\Delta$PRF in Eq.(\ref{eq:pairwise_ranking})) of ranking function $R$ are essentially determined by the ordering of data samples induced by the predicted ranking scores. In the following, we use $\operatorname{p}^{a}$ and $\operatorname{p}^{b}$ to represent the data sample sequences in $a$ and $b$ with their ranking scores ranked in descending orders. That is, $\operatorname{p}^{a}=[\operatorname{p}^{a(1)}, \operatorname{p}^{a(2)}, ..., \operatorname{p}^{a(n^a)} ]$ with $R(\operatorname{X}_{\operatorname{p}^{a(i)}},a)\geqslant R(\operatorname{X}_{\operatorname{p}^{a(j)}},a)$ if $0\leqslant i<j\leqslant n^a$, and $\operatorname{p}^{b}$ is defined in the same way, then we have the following definition.

\begin{definition}[Cross-Group Ordering \emph{O}]
	Given ordered instance sequences $\operatorname{p}^{a}$ and $\operatorname{p}^{b}$, the \emph{cross-group ordering} $o(\operatorname{p}^{a}, \operatorname{p}^{b})$ defines a ranked list combining the instances in groups $a$ and $b$ while keeps within group instance ranking orders preserved.
\end{definition}
One example of such cross-group ordering is:

$o(\operatorname{p}^{a}, \operatorname{p}^{b})$=$[ \operatorname{p}^{a(1)}, \operatorname{p}^{b(1)}, \operatorname{p}^{a(2)}, ..., \operatorname{p}^{a(n^{a})},..., \operatorname{p}^{b(n^{b})}] $. From the above definitions we can see that we only need cross-group ordering $o(\operatorname{p}^{a}, \operatorname{p}^{b})$ to estimate both algorithm utility measured by AUC and ranking fairness measured by either $\Delta$xAUC or $\Delta$PRF, i.e., we do not need the actual ranking scores. With this definition, we have the following proposition.

\begin{proposition}
	Given ordered instance sequences $\operatorname{p}^{a}$ and $\operatorname{p}^{b}$, there exists a crossing-group ordering  $o(\operatorname{p}^{a}, \operatorname{p}^{b})$ that can achieve $\Delta \operatorname{xAUC} \leq \min(\max({1}/{n_1^b}, {1}/{n_0^b}), \max({1}/{n_1^a}, {1}/{n_0^a}))$ or $\Delta \operatorname{PRF} \leq \min(\max({n^b_0}/{(n_1^a n_0)}, {1}/{n_0}), \max({n^a_0}/{(n_1^b n_0)}, {1}/{n_0}))$ with the two ranking fairness measures.
	\label{pro1}
\end{proposition}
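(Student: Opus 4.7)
The plan is to exhibit a very restricted form of cross-group ordering and then invoke a discrete intermediate value argument. For the xAUC bound, I would consider orderings that insert the entire sorted sequence $\operatorname{p}^b$ as one contiguous block between $\operatorname{p}^{a(k)}$ and $\operatorname{p}^{a(k+1)}$ for some $k\in\{0,1,\ldots,n^a\}$. Writing $P(k)$ and $N(k)$ for the numbers of positives and negatives among $\operatorname{p}^{a(1)},\ldots,\operatorname{p}^{a(k)}$, a direct pair count from Eq.~(\ref{eq:xAUC}) gives $\operatorname{xAUC}(a,b)=P(k)/n_1^a$ and $\operatorname{xAUC}(b,a)=(n_0^a-N(k))/n_0^a$, whence
\begin{equation*}
f(k)\;:=\;\operatorname{xAUC}(a,b)-\operatorname{xAUC}(b,a)\;=\;\frac{P(k)}{n_1^a}+\frac{N(k)}{n_0^a}-1.
\end{equation*}

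Since $P,N$ are nondecreasing in $k$, so is $f$, and incrementing $k$ by one adds either $1/n_1^a$ or $1/n_0^a$, so the step size is at most $\max(1/n_1^a,1/n_0^a)$. Together with $f(0)=-1$ and $f(n^a)=1$, choosing $k^{\star}=\min\{k:f(k)\ge 0\}$ yields $|f(k^{\star})|\le\max(1/n_1^a,1/n_0^a)$. Applying the same construction with the roles of $a$ and $b$ swapped produces an ordering with $\Delta\operatorname{xAUC}\le\max(1/n_1^b,1/n_0^b)$, and picking whichever ordering is better delivers the claimed xAUC bound.

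The PRF case follows the same block-insertion template. A pair count from Eq.~(\ref{eq:pairwise_ranking}) shows that the within-$a$ and within-$b$ contributions to $\operatorname{PRF}(a)$ and $\operatorname{PRF}(b)$ are independent of $k$, while the cross-group contributions are governed by $P(k)$ and $N(k)$ exactly as before. Consequently $\operatorname{PRF}(a)-\operatorname{PRF}(b)$ is a nondecreasing step function of $k$ with step sizes in $\{n_0^b/(n_1^a n_0),\,1/n_0\}$, nonpositive at $k=0$ and nonnegative at $k=n^a$. The same discrete IVT argument produces an ordering with $\Delta\operatorname{PRF}\le\max(n_0^b/(n_1^a n_0),\,1/n_0)$, and the symmetric construction yields $\max(n_0^a/(n_1^b n_0),\,1/n_0)$; taking the minimum finishes the proof. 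The main obstacle I anticipate is simply the careful pair-count bookkeeping that makes the monotone step-size decomposition come out cleanly; beyond that, the argument reduces to a single-variable discrete IVT applied twice.
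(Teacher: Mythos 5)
Your proposal is correct and matches the paper's own argument essentially step for step: the paper likewise inserts one group's entire sorted sequence as a contiguous block at position $i'$ in the other, observes that $T^a_1(i)-\overline{T^a_0}(i)$ (which is exactly your $f(k)$, since $\overline{T^a_0}(i)=1-N(i)/n_0^a$) increases monotonically from $-1$ to $1$ in steps of at most $\max(1/n_1^a,1/n_0^a)$, and applies the same discrete intermediate-value argument, with the PRF case handled via the identical construction after absorbing the $k$-independent iAUC terms into a constant. No substantive differences to report.
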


The proof of Proposition 1 is provided in Appendix.
From proposition ~\ref{pro1}, if we only care about ranking fairness, we can achieve a relatively low disparity by a trivial method introduced in Appendix. Our proposal in this paper is to look for an optimal cross-group ordering $o^*(\operatorname{p}^{a}, \operatorname{p}^{b})$, which can achieve ranking fairness and maximally maintain algorithm utility, through post-processing. We will use xAUC as the ranking fairness metric in our detailed derivations. The same procedure can be similarly developed for PRF based ranking fairness.

One important issue to consider is that the cross-group ordering that achieves the same level of ranking fairness is not unique. In Figure~\ref{samples}, we demonstrate an illustrative example with 9 samples showing that different cross-group ordering can result in different AUCs with the same $\Delta$xAUC. The middle row in Figure~\ref{samples} shows the original predicted ranking scores and their induced sample ranking, which achieves a ranking disparity $\Delta\mathrm{xAUC}=0.75$. The top row shows one cross-group ordering with ranking disparity $\Delta\mathrm{xAUC} = 0$ and algorithm utility $\mathrm{AUC}=0.56$. The bottom row shows another cross-group ranking with $\Delta\mathrm{xAUC} = 0$ but $\mathrm{AUC}=0.83$.

\begin{figure*}[h!]
	\setlength{\abovecaptionskip}{0.2cm}
	\setlength{\belowcaptionskip}{-0.2cm}
	\centering
	\subfigure[the ordering procedure]{
		\centering
		\includegraphics[width=0.7\columnwidth]{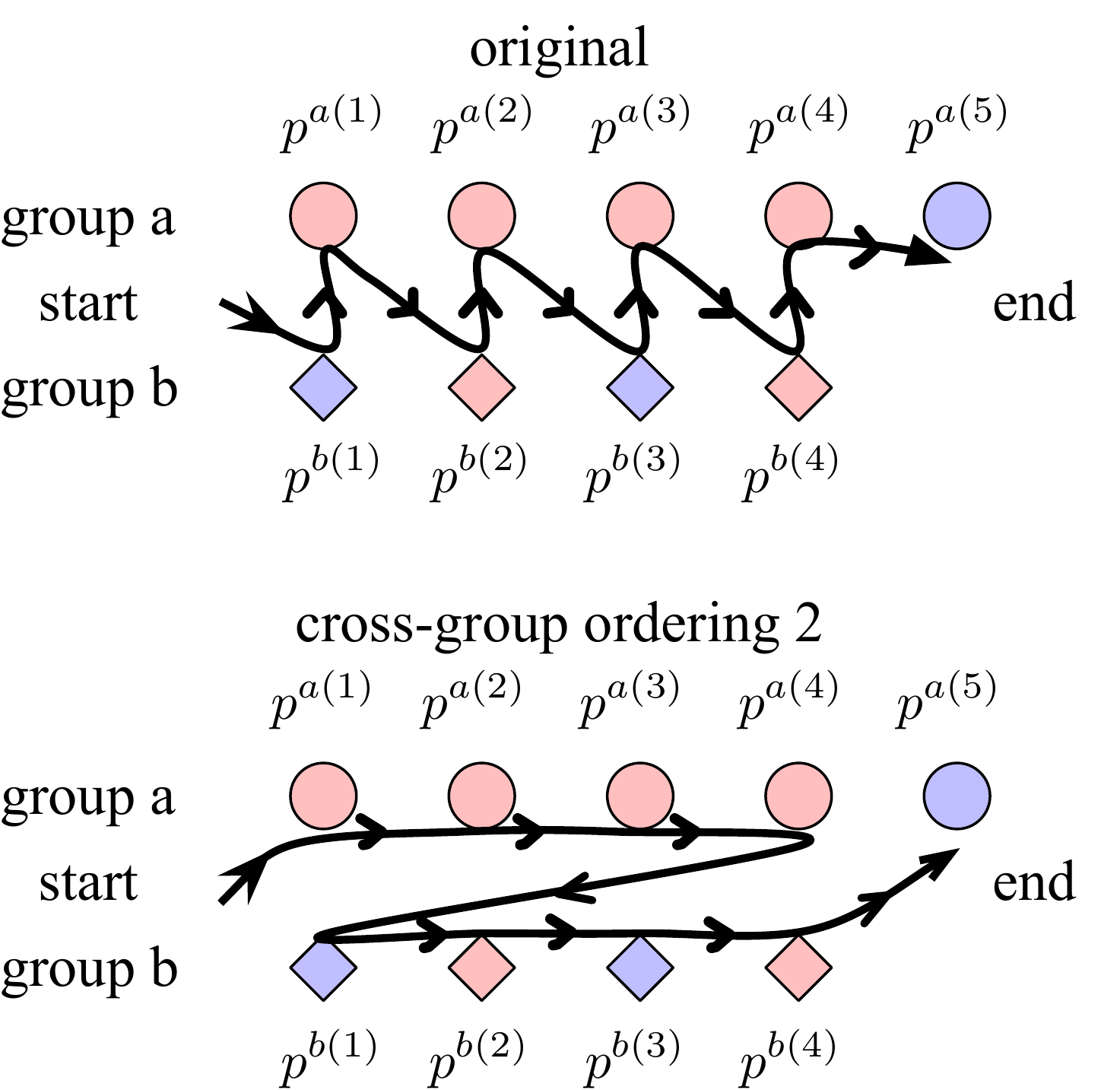}
		\label{fig:optimize_example}
	}%
	\subfigure[the optimization procedure]{
		\centering
		\includegraphics[width=1.2\columnwidth]{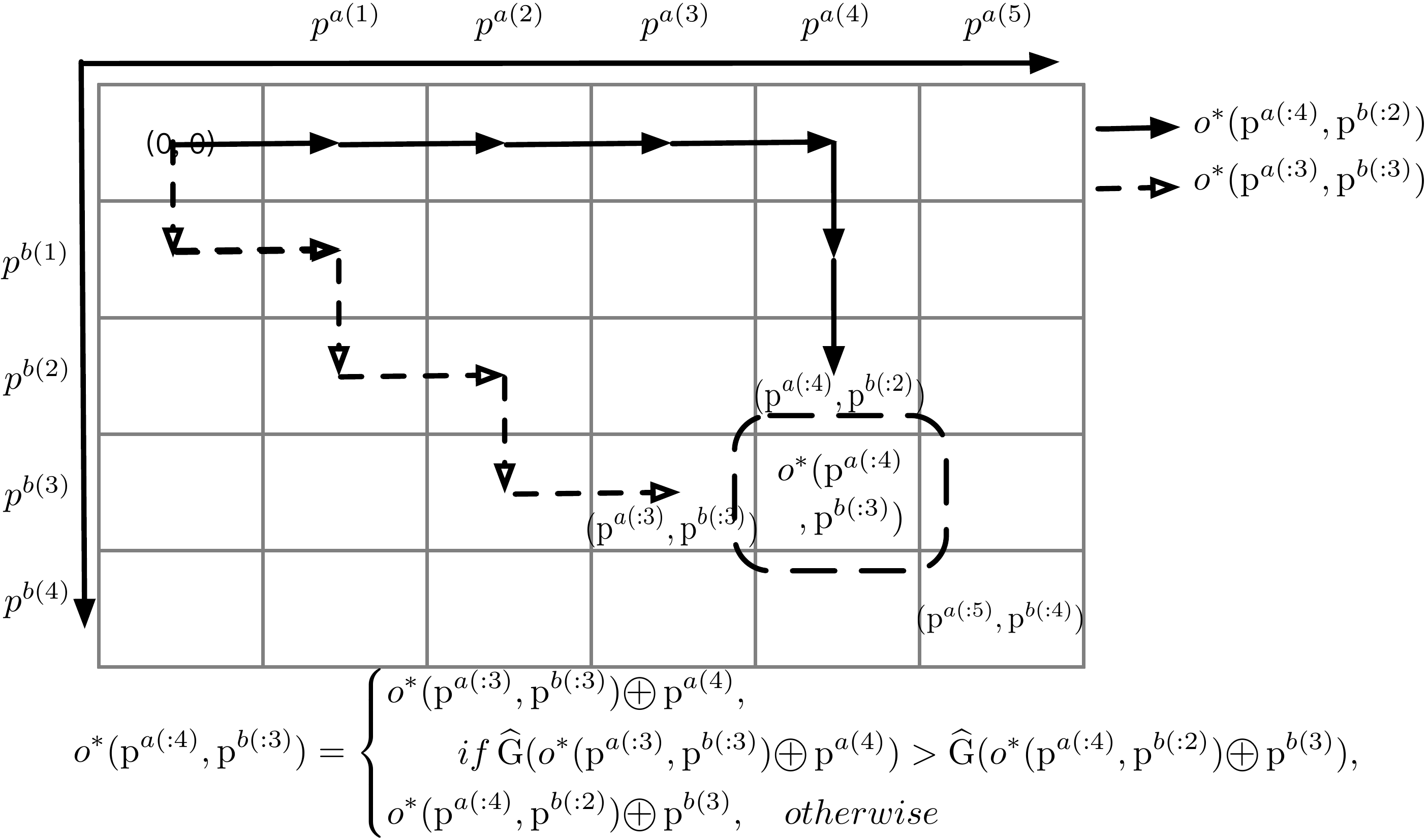}
		\label{fig:optimize_pro}
	}%
	\caption{(a).Illustration of how our proposed method optimizes such relative ordering. (b).Illustration of how our proposed method optimizes such relative ordering.}
	\label{fig:optimize_pro}
\end{figure*}

\noindent\textbf{Problem Setting}. Our goal is to identify an optimal cross-group ordering $o^*(\operatorname{p}^{a}, \operatorname{p}^{b})$ that leads to the minimum ranking disparity (measured by $\Delta$xAUC) with algorithm utility (measured by AUC) maximally maintained. We can maximize the following objective
\begin{equation}
\label{eq:optimizationobj}
J(o(\operatorname{p}^{a}, \operatorname{p}^{b}))=\text {AUC}(o(\operatorname{p}^{a}, \operatorname{p}^{b}))- \lambda \cdot\Delta\mathrm{xAUC}(o(\operatorname{p}^{a}, \operatorname{p}^{b})),
\end{equation}
where we use $\text {AUC}(o(\operatorname{p}^{a}, \operatorname{p}^{b}))$ to denote the AUC induced by the ordering $o(\operatorname{p}^{a}, \operatorname{p}^{b})$, which is calculated in the same way as in Eq.(\ref{eq:AUC}) if we think of $R(X_i)$ returning the rank of $X_i$ instead of the actual ranking score. Please note that in the rest of this paper we will use similar notations for xAUC without causing further confusions. Similarly, $\Delta\mathrm{xAUC}(o(\operatorname{p}^{a}, \operatorname{p}^{b}))$ is the ranking disparity induced by ordering $o(\operatorname{p}^{a}, \operatorname{p}^{b})$ calculated as in Eq.(\ref{eq:delta_xAUC}). Then we have the following proposition.

\begin{proposition}
	The objective function in Eq.(\ref{eq:optimizationobj}) is equivalent to:
	\begin{equation}
	\label{eq:G}
	\begin{aligned}
	&\mathrm{G}(o(\operatorname{p}^{a}, \operatorname{p}^{b}))= k_{a,b} \cdot \operatorname{xAUC}(o(\operatorname{p}^{a}, \operatorname{p}^{b}))\\
	&+ k_{b,a}  \cdot \operatorname{xAUC}(o(\operatorname{p}^{b}, \operatorname{p}^{a})) - \lambda  k \cdot\Delta\mathrm{xAUC}(o(\operatorname{p}^{a}, \operatorname{p}^{b})),
	\end{aligned}
	\end{equation}
	where $k_{a,b} = n^{a}_{1}n^{b}_{0}$, $k_{b,a} = n^{a}_{0} n^{b}_{1}$, $k = n_{0}n_{1}$, $\operatorname{xAUC}(o(\operatorname{p}^{a}, \operatorname{p}^{b}))$ indicates $ \operatorname{xAUC}(a,b)$ in Eq.~\ref{eq:xAUC} ($\operatorname{xAUC}(o(\operatorname{p}^{b}, \operatorname{p}^{a}))$ indicates $\operatorname{xAUC}(b,a)$ in Eq.~\ref{eq:xAUC}), and $\lambda$ is the hyperparameter trading off the algorithm utility and ranking disparity.
\end{proposition}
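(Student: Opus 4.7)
The plan is to decompose the total AUC according to the group memberships of the positive and negative samples, showing that only two of the four resulting components depend on the cross-group ordering while the other two are constants determined by the fixed within-group orderings $\operatorname{p}^a$ and $\operatorname{p}^b$.

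First I would rewrite $\operatorname{AUC}(o(\operatorname{p}^{a},\operatorname{p}^{b}))$ by splitting the double sum in Eq.~(\ref{eq:AUC}) according to which group the positive index $i$ and the negative index $j$ belong to. This partitions all $n_1 n_0$ positive--negative pairs into four disjoint classes of sizes $n_1^a n_0^a$, $n_1^a n_0^b$, $n_1^b n_0^a$, and $n_1^b n_0^b$. Writing $\operatorname{AUC}^a$ and $\operatorname{AUC}^b$ for the within-group AUCs and recalling that $\operatorname{xAUC}(o(\operatorname{p}^{a},\operatorname{p}^{b}))$ and $\operatorname{xAUC}(o(\operatorname{p}^{b},\operatorname{p}^{a}))$ correspond to the two cross-group classes, this yields
\begin{equation*}
k\cdot \operatorname{AUC}(o) = n_1^a n_0^a \operatorname{AUC}^a + k_{a,b}\operatorname{xAUC}(o(\operatorname{p}^{a},\operatorname{p}^{b})) + k_{b,a}\operatorname{xAUC}(o(\operatorname{p}^{b},\operatorname{p}^{a})) + n_1^b n_0^b \operatorname{AUC}^b,
\end{equation*}
with $k = n_0 n_1$ and $k_{a,b},k_{b,a}$ as defined in the statement.

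Next I would argue that the two within-group terms $n_1^a n_0^a \operatorname{AUC}^a$ and $n_1^b n_0^b \operatorname{AUC}^b$ are constants with respect to the optimization variable $o(\operatorname{p}^{a},\operatorname{p}^{b})$: by the definition of a cross-group ordering, the relative order of instances inside each group is frozen at $\operatorname{p}^a$ and $\operatorname{p}^b$, so $\operatorname{AUC}^a$ and $\operatorname{AUC}^b$ depend only on the fixed inputs, not on how the two lists are interleaved. Denoting their sum by $C$, we obtain
\begin{equation*}
k\cdot J(o) = k\cdot\operatorname{AUC}(o) - \lambda k\cdot \Delta\operatorname{xAUC}(o) = \mathrm{G}(o(\operatorname{p}^{a},\operatorname{p}^{b})) + C.
\end{equation*}

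Since $k>0$ and $C$ is independent of the ordering, maximizing $J$ over cross-group orderings is equivalent to maximizing $\mathrm{G}$, which is the asserted equivalence. The only step requiring any care is the bookkeeping in the decomposition of $\operatorname{AUC}$ into four class-conditional pieces and verifying that the normalizing factors $1/(n_1^a n_0^b)$ and $1/(n_1^b n_0^a)$ embedded in the xAUC definition in Eq.~(\ref{eq:xAUC}) cancel correctly against $k_{a,b}$ and $k_{b,a}$; once this is done the remainder is immediate.
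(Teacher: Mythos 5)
Your proposal is correct and follows essentially the same route as the paper: decompose the $n_1 n_0$ positive--negative pairs by group membership into two within-group terms (the paper's $\mathrm{iAUC}(a)$, $\mathrm{iAUC}(b)$, your $\operatorname{AUC}^a$, $\operatorname{AUC}^b$) and the two cross-group xAUC terms, observe that the within-group terms are invariant under any cross-group ordering, and conclude that maximizing $J$ is equivalent to maximizing $\mathrm{G}$ up to an additive constant and the positive factor $k$. The bookkeeping of the normalizers against $k_{a,b}$ and $k_{b,a}$ checks out.
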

The proof of this proposition is provided in Appendix.

\section{Algorithm}

To intuitively understand the post-processing process, we treat the cross-group ordering as a path from the higher ranked instances to lower ranked instances. With the same example shown in Figure \ref{samples}, we demonstrate the original ordering of those samples induced by their predicted ranking scores (middle row of Figure \ref{samples}) on the top of Figure~\ref{fig:optimize_example}, where the direction on the path indicates the ordering. The ordering corresponding to the bottom row of Figure~\ref{samples} is demonstrated at the bottom of Figure~\ref{fig:optimize_example}.

\begin{algorithm}
	\SetAlgoLined
	\caption{xOrder: optimize the cross-group ordering with post-processing}
	\label{alg:cross-group ordering}

	\KwIn{$\lambda$, the ranking scores $\operatorname{S}^{a}$, $\operatorname{S}^{b}$ from a predictive ranking function $R$}

	Sort the ranking scores $\operatorname{S}^{a}$ and $\operatorname{S}^{b}$ in descending order. Get the instances ranking $\operatorname{p}^{a} = [\operatorname{p}^{a(1)},\operatorname{p}^{a(2)},...\operatorname{p}^{a(n^{a})}]$ and $\operatorname{p}^{b} = [\operatorname{p}^{b(1)},\operatorname{p}^{b(1)},...\operatorname{p}^{b(n^{b})}]$\\

	Initialize cross-group ordering $o^{*}(\operatorname{p}^{a(:i)}, \operatorname{p}^{b(:0)})$ $(0 \leq i \leq n^{a})$, $o^{*}(\operatorname{p}^{a(:0)}, \operatorname{p}^{b(:j)})$ $(0 \leq j \leq n^{b})$,\\
	\ForAll{i = 1, 2, 3... $n^{a}$}{
		\ForAll{j = 1, 2, 3...$n^{b}$}{
			Calculate $\operatorname{\widehat{G}}(o^{*}(\operatorname{p}^{a(:i-1)}, \operatorname{p}^{b(:j)}) \textcircled{+} \operatorname{p}^{a(:i)})$, $\operatorname{\widehat{G}}(o^{*}(\operatorname{p}^{a(:i)}, \operatorname{p}^{b(:j-1)}) \textcircled{+} \operatorname{p}^{b(:j)})$ using the Eq.(\ref{eq:G1})\\
			Update $o^{*}(\operatorname{p}^{a(:i)}, \operatorname{p}^{b(:j)})$ according to the Eq.(\ref{G2})}
	}
	\KwOut{the learnt cross-group ordering $o^{*}(\operatorname{p}^{a}, \operatorname{p}^{b})$}
\end{algorithm}
With this analogy, the optimal cross-group ordering $o^{*}(\operatorname{p}^{a}, \operatorname{p}^{b})$ can be achieved by a path finding process. The path must start from $\operatorname{p}^{a(1)}$ or $\operatorname{p}^{b(1)}$, and end with $\operatorname{p}^{a(n^a)}$ or $\operatorname{p}^{b(n^b)}$. Each instance in $\operatorname{p}^{a}$ and $\operatorname{p}^{b}$ can only appear once in the final path, and the orders of the instances in the final path must be the same as their orders in $\operatorname{p}^a$ and $\operatorname{p}^b$. The path can be obtained through a dynamic programming process. In particular, we first partition the entire decision space into a $(n^b+1)\times (n^a+1)$
grid. Each location $(i,j)~0\leqslant i\leqslant n^a,0 \leqslant j\leqslant n^b$ on the lattice corresponds to a decision step on determining whether to add $p^{a(i)}$ or $p^{b(j)}$ into the current path, which can be determined with the following rule:
\begin{equation}
\label{G2}
\begin{aligned}
& \text{Given} \quad o^{*}(\operatorname{p}^{a(:i-1)}, \operatorname{p}^{b(:j)}), o^{*}(\operatorname{p}^{a(:i)}, \operatorname{p}^{b(:j-1)})\\
& \text{if:}\ \widehat{\operatorname{G}}(o^*(\operatorname{p}^{a(:i-1)}, \operatorname{p}^{b(:j)})\textcircled{+} \operatorname{p}^{a(i)})>\widehat{\operatorname{G}}(o^*(\operatorname{p}^{a(:i)}, \operatorname{p}^{b(:j-1)})\textcircled{+} \operatorname{p}^{b(j)})\\
& \quad \quad o^{*}(\operatorname{p}^{a(:i)}, \operatorname{p}^{b(:j)}) = o^{*}(\operatorname{p}^{a(:i-1)}, \operatorname{p}^{b(:j)})\textcircled{+} \operatorname{p}^{a(i)};\\
& \text{otherwise:}\ o^{*}(\operatorname{p}^{a(:i)}, \operatorname{p}^{b(:j)}) = o^{*}(\operatorname{p}^{a(:i)}, \operatorname{p}^{b(:j-1)}) \textcircled{+} \operatorname{p}^{b(j)},
\end{aligned}
\end{equation}
where $\operatorname{p}^{a(:i)}$ represents the first $i$ elements in $\operatorname{p}^a$ ($\operatorname{p}^{a(:i-1)}$, $\operatorname{p}^{b(:j-1)}$ and $\operatorname{p}^{b(:j)}$ are similarly defined). $o^{*}(\operatorname{p}^{a(:i-1)}, \operatorname{p}^{b(:j)}) \textcircled{+} \operatorname{p}^{a(i)}$ means appending $\operatorname{p}^{a(i)}$ to the end of $o^*(\operatorname{p}^{a(:i-1)}, \operatorname{p}^{b(:j)})$. The value of function $\widehat{\mathrm{G}}(o(\operatorname{p}^{a(:i)},\operatorname{p}^{b(:j)}))$ is defined as follows

\begin{equation}
\label{eq:G1}
\begin{aligned}
&\widehat{\operatorname{G}}(o(\operatorname{p}^{a(:i)}, \operatorname{p}^{b(:j)})) =\\
&k_{a,b} \cdot {\operatorname{xAUC}}(o(\operatorname{p}^{a(:i)}, \operatorname{p}^{b(:j)}) \textcircled{+} \operatorname{p}^{b(j+1:n^b)})\\
&+k_{b,a} \cdot {\operatorname{xAUC}}(o(\operatorname{p}^{b(:j)}, \operatorname{p}^{a(:i)}) \textcircled{+} \operatorname{p}^{a(i+1:n^a)})\\
&+\lambda k\cdot |{\operatorname{xAUC}}(o(\operatorname{p}^{a(:i)}, \operatorname{p}^{b(:j)}) \textcircled{+} \operatorname{p}^{b(j+1:n^b)})\\
&-{\operatorname{xAUC}}(o(\operatorname{p}^{b(:j)}, \operatorname{p}^{a(:i)}) \textcircled{+} \operatorname{p}^{a(i+1:n^a)})|,
\end{aligned}
\end{equation}

in which $\widehat{\operatorname{G}}(o(\operatorname{p}^{a(:i)}, \operatorname{p}^{b(:j)})) = \operatorname{G}(o(\operatorname{p}^{a}, \operatorname{p}^{b}))$ in Eq.~\ref{eq:G} when $i = n^a,j=n^b$.

Figure \ref{fig:optimize_example} demonstrates the case of applying our rule to step $i=4,j=3$ in the example shown in Figure \ref{samples}.

Algorithm \ref{alg:cross-group ordering} summarized the whole pipeline of identifying the optimal path. In particular, our algorithm calculates a cost for every point $(i,j)$ in the decision lattice as the value of the $\operatorname{\widehat{G}}$ function evaluated on the path reaching $(i,j)$ from $(0,0)$. Please note that the first row ($j=0$) only involves the instances in $a$, therefore the path reaching the points in this row are uniquely defined considering the within group instance order should be preserved in the path. The decision points in the first column ($i=0$) enjoy similar characteristics. After the cost values for the decision points in the first row and column are calculated, the costs values on the rest of the decision points in the lattice can be calculated iteratively until $i=n^a$ and $j=n^b$.

Algorithm 1 can also be viewed as a process to maximize the objective function in Eq.(\ref{eq:optimizationobj}). It has $O(N^2)$ time complexity as it is a 2-D dynamic programming process. Different $\lambda$ values trade-off the algorithm utility and ranking fairness differently and the solution Algorithm 1 converges to is a local optima. Moreover, we have the following proposition.

\begin{theorem}
	xOrder can achieve the global optimal solution of maximizing Eq.(\ref{eq:optimizationobj}) with $\lambda=0$. xOrder has the upper bounds of fairness disparities defined in Eq.(\ref{eq:delta_xAUC}) and Eq.(\ref{eq:deltaPRF}) as $\lambda$ approaches infinity:

	\begin{equation}
	\begin{aligned}
	&\Delta \mathrm{x} \mathrm{AUC} \leq \max \left(\frac{1}{n_{1}^{a}}, \frac{1}{n_{1}^{b}}\right)\\
	&\Delta \mathrm{PRF} \leq \max \left(\frac{n_{0}^{b}}{n_{0} \cdot n_{1}^{a}}, \frac{n_{0}^{a}}{n_{0} \cdot n_{1}^{b}}\right)
	\end{aligned}
	\end{equation}

\end{theorem}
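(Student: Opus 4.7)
The plan is to treat the two extremes of $\lambda$ separately, since they correspond to rather different regimes of the objective in Eq.(\ref{eq:optimizationobj}).

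For $\lambda=0$, the first step is to invoke Proposition~2 to see that maximizing $J$ reduces to maximizing $G=k_{a,b}\operatorname{xAUC}(a,b)+k_{b,a}\operatorname{xAUC}(b,a)$. Since the within-group orderings of $\operatorname{p}^a$ and $\operatorname{p}^b$ are fixed, adding the constant contribution of the within-group pairs turns this into $n_0 n_1\cdot\text{AUC}$, so the problem collapses to maximizing AUC over all cross-group orderings. Next I would verify the principle of optimality for the recursion in Algorithm~1: at every lattice point $(i,j)$, the last element of any cross-group ordering on $(\operatorname{p}^{a(:i)},\operatorname{p}^{b(:j)})$ must be either $\operatorname{p}^{a(i)}$ or $\operatorname{p}^{b(j)}$. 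Removing that element leaves a cross-group ordering on the corresponding sub-prefix, and because AUC is a sum of pairwise indicators, the extra cross-pairs created by the last element depend only on its label and on the multiset of earlier elements, which is identical for the two candidate predecessors at $(i-1,j)$ and $(i,j-1)$. Hence the heuristic $\widehat{G}$ in Eq.(\ref{eq:G1}) ranks the two candidates by exactly the quantity that matters for AUC, and an induction on $i+j$ shows that Algorithm~1 reaches the global optimum at $(n^a,n^b)$.

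For $\lambda\to\infty$ the second term of the objective dominates and the algorithm effectively minimizes $\Delta\operatorname{xAUC}$ (respectively $\Delta\operatorname{PRF}$). Here my plan is constructive: I would exhibit an explicit cross-group ordering that attains $\Delta\operatorname{xAUC}\le\max(1/n_1^a,\,1/n_1^b)$ and then argue that the DP reaches something at least as fair. A natural witness is the ``balanced'' interleaving in which the positives of groups $a$ and $b$ are spread so that, in every initial segment of the merged list, the cumulative counts of positives from the two groups differ by at most one; by expanding $\operatorname{xAUC}(a,b)$ and $\operatorname{xAUC}(b,a)$ as averages over these initial segments, this pointwise mismatch of at most one forces $|\operatorname{xAUC}(a,b)-\operatorname{xAUC}(b,a)|$ to be bounded by the reciprocal of the smaller positive count, i.e.\ by $\max(1/n_1^a,1/n_1^b)$. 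A symmetric construction balancing negatives inside each prefix instead of positives yields the stated $\Delta\operatorname{PRF}$ bound. Since every ordering preserving the within-group orders is one of the lattice paths the DP can reach, and since in the $\lambda\to\infty$ limit $\widehat{G}$ strictly prefers any candidate with lower disparity, the path returned by the algorithm is no worse than this witness.

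The main obstacle will be the $\lambda\to\infty$ case. Unlike the AUC-only case, $|\operatorname{xAUC}(a,b)-\operatorname{xAUC}(b,a)|$ is not a sum of independent pairwise contributions: a single local decision can move both $\operatorname{xAUC}(a,b)$ and $\operatorname{xAUC}(b,a)$ in the same direction, so the disparity is non-monotone along a DP path. Consequently the DP is only a heuristic for the fairness term, and the delicate step is to show that the approximate evaluation $\widehat{G}$ used at each lattice point does not prune away the witness ordering before it is completed. Arranging the witness so that it agrees with the DP's preferred branch at every lattice point, or equivalently showing that any deviation the DP takes from the witness is monotonically fairness-improving, is what will drive the final bound.
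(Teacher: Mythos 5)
Your treatment of the $\lambda=0$ case is sound and is essentially the paper's argument: reduce $J$ to $G$ via Proposition~2, observe that the increment contributed by the last element of a prefix ordering depends only on its label and on the lattice cell $(i,j)$, and run the standard optimal-substructure induction (the paper phrases it as a contradiction argument on a hypothetical better ordering at $(i,j)$).

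The $\lambda\to\infty$ half has two genuine gaps. First, the witness is wrong: keeping the cumulative counts of $a$-positives and $b$-positives within one of each other in every prefix does \emph{not} bound $|\mathrm{xAUC}(a,b)-\mathrm{xAUC}(b,a)|$, because the two quantities average those counts over \emph{different} sets of negatives with different normalizations. Concretely, take $n_1^a=n_1^b$, alternate the positives perfectly, and place all $a$-negatives above the entire list and all $b$-negatives below it (both placements are consistent with admissible within-group orders); every prefix is balanced in your sense, yet $\mathrm{xAUC}(a,b)=1$ and $\mathrm{xAUC}(b,a)=0$. What must be balanced is the signed partial disparity itself --- for each positive appended, how many opposite-group negatives end up below it --- not the raw positive counts. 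Second, the step ``the path returned by the algorithm is no worse than this witness'' is precisely the hard part; you flag it as the obstacle but do not resolve it, and it cannot be resolved for an arbitrary fixed witness, since $-|\cdot|$ of the partial disparity has no optimal substructure and the DP may discard the witness's prefix at an intermediate cell.

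The paper closes both gaps with a single device you are missing: it defines a \emph{greedy forward search} that at each cell appends whichever next element keeps $|\widehat{\mathrm{H}}|$ (the signed partial disparity) small. Because each append changes $\widehat{\mathrm{H}}$ by a nonnegative amount of at most $\max(1/n_1^a,1/n_1^b)$, an induction shows the greedy's final disparity obeys the stated bound (Lemma~1); a second induction along the greedy's path shows that xOrder's table value dominates the greedy's at every cell the greedy visits, because the DP takes a max over predecessors and the append increment is identical for both algorithms (Lemma~2). You would need an analogue of that domination lemma --- a cell-by-cell comparison of objective values against a witness that is itself defined through the same $\widehat{\mathrm{G}}$ the DP optimizes --- rather than hoping the DP reaches a fixed interleaving. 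The $\Delta\mathrm{PRF}$ bound then follows the same template, since $\Delta\mathrm{PRF}$ is a weighted difference of the two partial xAUCs plus a constant fixed by the within-group orders; ``balancing negatives'' is not the right analogue.
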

The proof of this theorem is provided in Appendix.

\begin{table} \scriptsize
    \setlength{\abovecaptionskip}{0cm}
    \setlength{\belowcaptionskip}{-.2cm}
	\caption{Summary of benchmark data sets}.
	\label{table:summary_eicu}
	\centering
	\begin{tabular}{c|c|c|c|c}
		\hline
		Dataset & $n$ & $p$ & $\mathrm{A}$ & $\mathrm{Y}$ \\
		\hline
		COMPAS\cite{angwin2016machine} & 6,167 & 400 & Race(white, non-white) & Non-recidivism within 2 years \\
		Adult\cite{kohavi1996scaling} & 30,162 & 98 & Race(white, non-white) & Income $\geq$ 50K\\
		Framingham~\cite{levy199950} & 4,658 & 7 & Gender(male,female) & 10-year CHD incidence\\
		MIMIC\cite{johnson2016mimic} & 21,139 & 714 & Gender(male,female) & Mortality\\
		MIMIC\cite{johnson2016mimic} & 21,139 & 714 & Race(white, non-white) & Prolonged length of stay\\
		eICU\cite{pollard2018eicu} & 17,402 & 60 & Race(white, non-white) & Prolonged length of stay\\
		\hline
	\end{tabular}
	\vspace{-1.em}
\end{table}

\begin{figure*}[htbp]
	\setlength{\abovecaptionskip}{0pt}
	\setlength{\belowcaptionskip}{-0.0cm}
	\centering
	\includegraphics[width=1.6\columnwidth]{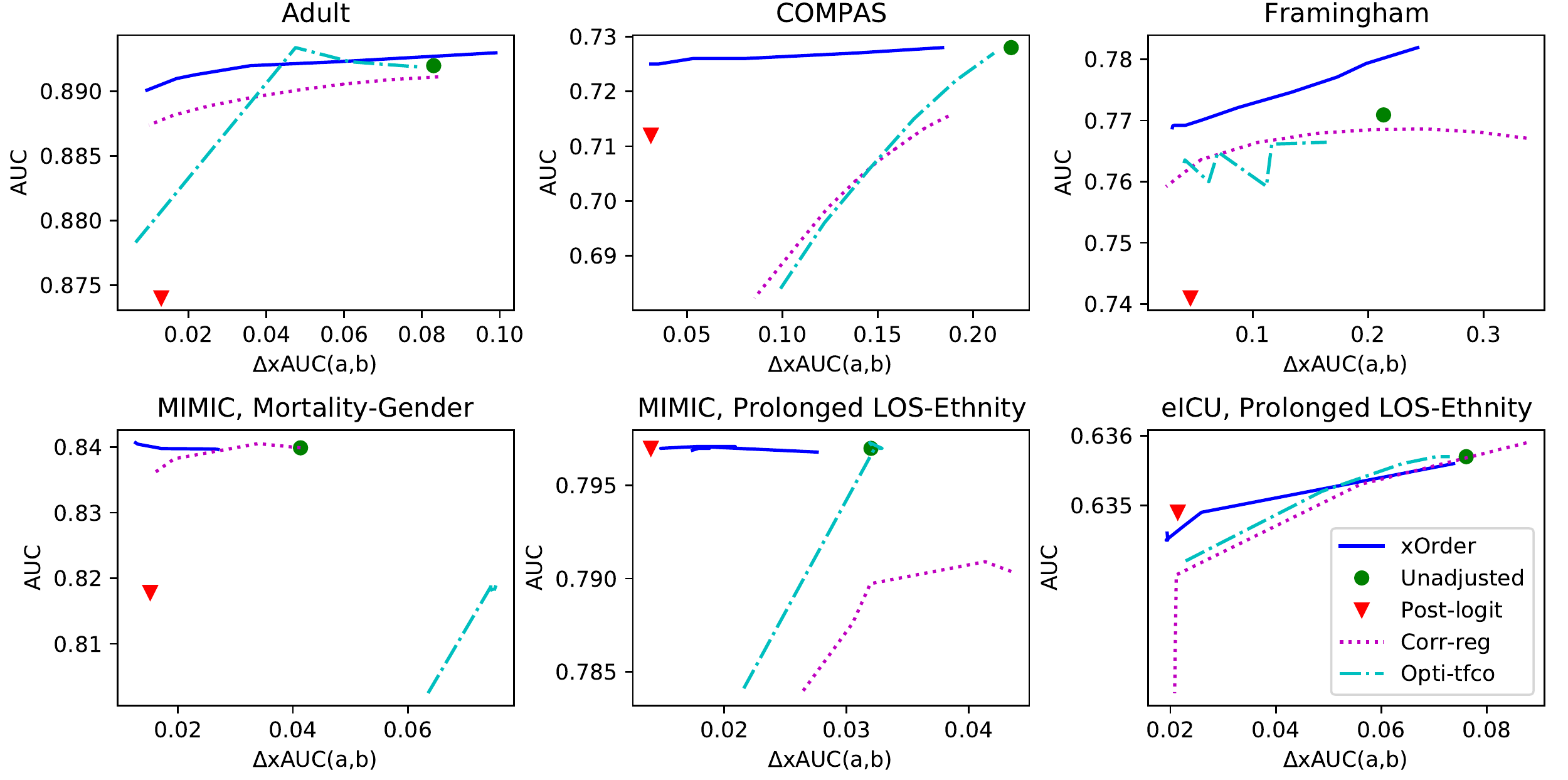}
	\caption{$\mathrm{AUC}$-$\mathrm{\Delta xAUC}$ trade-off with linear model.}
	\label{fig:lr_result}
\end{figure*}
\noindent\textbf{The testing phase}. Since the learned ordering in training phase cannot be directly used in testing stage, we propose to transfer the information of the learned ordering by rearranging the ranking scores of the disadvantaged group (which is assumed to be group $b$ without the loss of generality). If we assume the distribution of predicted ranking scores is the same on the training and test sets, the same scores will have the same quantiles in both sets. It means that by rearranging the ranking scores through interpolation, we can transfer the ordering learned by \texttt{xOrder} on training set. In particular, the process contains two steps:
\begin{enumerate}[leftmargin=*]
	\item \underline{\emph{Rank score adjustment for the group b in training data}}. Fixing the ranking scores for training instances in group $a$, the adjusted ranking scores for training instances in group $b$ will be obtained by uniform linear interpolation according to their relative positions in the learned ordering. For example, if we have an ordered sequence $(\operatorname{p}^{a(1)},\operatorname{p}^{b(1)},\operatorname{p}^{b(2)},\operatorname{p}^{a(2)})$ with the ranking scores for $\operatorname{p}^{a(1)}$ and $\operatorname{p}^{a(2)}$ being 0.8 and 0.5, then the adjusted ranking scores for $\operatorname{p}^{b(1)}$ and $\operatorname{p}^{b(2)}$ being 0.7 and 0.6.

	\item \underline{\emph{Rank score adjustment for the group b in test data}}. For testing instances, we follow the same practice of just adjusting the ranking scores of the instances from group $b$ but keep the ranking scores for instances from group $a$ unchanged. \sen{We propose a proportional interpolation for the adjustment process which has O(N) time complexity. In particular, we first rank training instances from group $b$ according to their raw unadjusted ranking scores to get an ordered list. Then the adjusted ranking scores for testing instances in $b$ can be obtained by a linear transformation. For example, if we want to adjust the testing original score $\operatorname{p_{te}}^{b(i)}$ given training original ordered sequence $(\operatorname{p}^{b(1)},\operatorname{p}^{b(2)}$ being 0.8 and 0.5 and the training adjusted ordered sequence $(\operatorname{\hat{p}}^{b(1)},\operatorname{\hat{p}}^{b(2)}$ being 0.7 and 0.4, then the quantile of the testing adjusted ranking score $\operatorname{\hat{p}_{te}}^{b(i)}$ in [0.4, 0.7] equals to the quantile of $\operatorname{p_{te}}^{b(i)}$ in [0.5, 0.8]:}
	$\frac{0.7 - \operatorname{\hat{p}_{te}}^{b(i)}}{0.7-0.4} = \frac{0.8 - \operatorname{p_{te}}^{b(i)}}{0.8-0.5}.$
\end{enumerate}

\sen{From the above process, we transfer the learned ordering from training set to test set by interpolation. We also try other operations to achieve interpolation (e.g. ranking uniform linear interpolation) and get similar experimental results.}

\begin{figure*}[htbp]
	\setlength{\abovecaptionskip}{0pt}
	\setlength{\belowcaptionskip}{-.1cm}
	\centering
	\includegraphics[width=1.9\columnwidth]{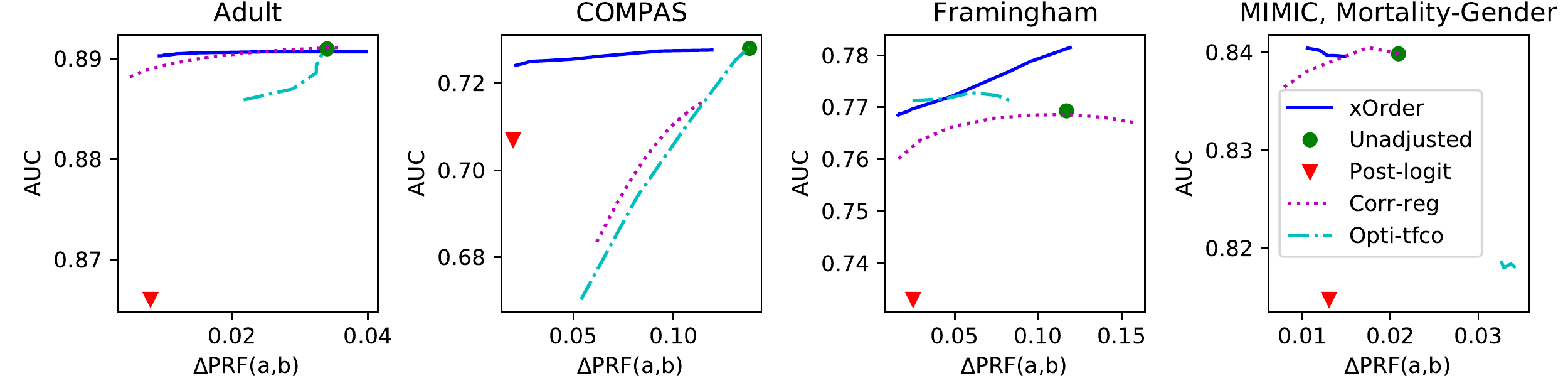}
	\caption{$\mathrm{AUC}$-$\Delta \mathrm{PRF}$ trade-off with linear model.}
	\label{fig:pr_result}
	\vspace{-.1em}
\end{figure*}

\begin{figure*}[htbp]
	\setlength{\abovecaptionskip}{0pt}
	\setlength{\belowcaptionskip}{-0.2cm}
	\centering
	\includegraphics[width=1.9\columnwidth]{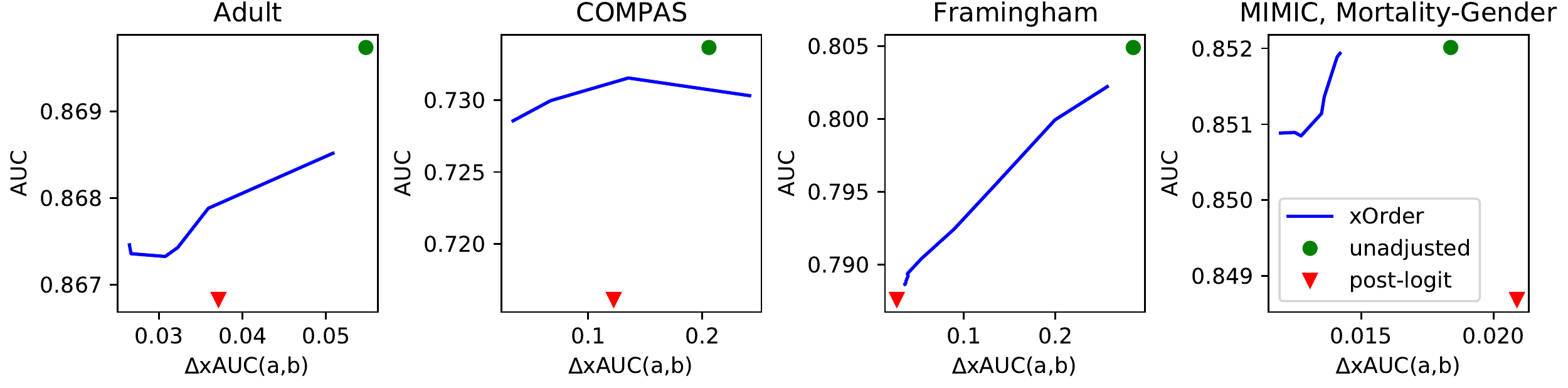}
	\caption{$\mathrm{AUC}$-$\Delta \mathrm{xAUC}$ trade-off with bipartite rankboost model.}
	\label{fig:rb_result}
	\vspace{-.2em}
\end{figure*}

\section{Experiment}
\subsection{Data Sets and Baselines}
\noindent\textbf{Data sets}. We conduct experiments on 4 popular benchmark data sets and two real-world clinical data sets for studying algorithm fairness. For each data set, we randomly select 70\% of the data set as train set and the remaining as test set following the setting in~\cite{kallus2019fairness}. We use the non-sensitive features as the input. The basic information for these data sets is summarized in Table \ref{table:summary_eicu}, where $n$ and $p$ are the number of instances and features, and CHD is the abbreviation of coronary heart disease. To study ranking fairness in real world clinical ranking prediction scenarios, we analyze the unfair phenomena and conduct experiments on two clinical data sets MIMIC-III and eICU. As in Table \ref{table:summary_eicu}, MIMIC-III a real world electronic health record repository for ICU patients~\cite{johnson2016mimic} The data set was preprocessed as in~\cite{harutyunyan2019multitask} with $n$ = 21,139 and $p$ = 714 ($p$ is the number of features). Each instance is a specific ICU stay. eICU is another real world dataset of ICU electronic health records~\cite{pollard2018eicu}. It is was preprocessed with $n$ = 17,402 and $p$ = 60. We consider the same setting of the label and protected variable for MIMIC and eICU. For label $\operatorname{Y}$, we consider in-hospital mortality and prolonged length of stay (whether the ICU stay is longer than 1 week). For protected variable $\operatorname{A}$, we consider gender (male, female) and ethnicity (white, non-white).

\noindent\textbf{Baselines}. In this paper, $\Delta \mathrm{xAUC}$ and $\Delta \mathrm{PRF}$ are used as ranking fairness metrics and $\mathrm{AUC}$ is used to measure the algorithm utility. To verify the validity of our algorithm, we adopt 2 base models: linear model \cite{narasimhan2020pairwise} \cite{kallus2019fairness} and rankboost~\cite{freund2003efficient} in our empirical evaluations. The algorithms that originally proposed xAUC~\cite{kallus2019fairness} PRF~\cite{beutel2019fairness} as well as the framework proposed in ~\cite{narasimhan2020pairwise} are evaluated as baselines, which are denoted as \textbf{post-logit}, \textbf{corr-reg} and \textbf{opti-tfco}. We also report the performance obtained by the two original base models without fairness considerations (called \textbf{unadjusted}). post-logit and \texttt{xOrder} are post-processing algorithms and can be used to all base models.

We propose the following training procedure for linear model. For corr-reg, the model is trained by optimizing the weighted sum of the cross-entropy loss and fairness regularization originally proposed in ~\cite{beutel2019fairness} with gradient descent. Since the constrained optimization problem in opti-tfco can not be solved with gradient descent, we optimize it with a specific optimization framework called tfco \cite{cotter2019two} as the original implementation did \cite{narasimhan2020pairwise}. For post-processing algorithms including post-logit and \texttt{xOrder}, we train the linear model on training data without any specific considerations on ranking fairness to obtain the unadjusted prediction ranking scores. The model is optimized with two optimization methods: 1)optimizing cross-entropy with gradient descent; 2) solving an unconstrained problem to maximize utility with tfco. For the rankboost, as it is optimized by boosting iterations, it is not compatible with corr-reg and opti-tfco. We only report the results of post-logit and \texttt{xOrder}.

The total $\mathrm{AUC}$ and ranking fairness metrics on test data are reported. In addition, we plot a curve showing the trade-off between ranking fairness and algorithm utility for \texttt{xOrder}, corr-reg and opti-tfco with varying the trade-off parameter. \pan{For post-logit, we follow the procedure in its original paper \cite{kallus2019fairness} to choose the optimal parameters with lowest disparity and report the corresponding result.} We repeat each setting of the experiment ten times and report the average results. To make the comparisons fair, we report the results of different methods under the same base model separately.

\subsection{Evaluation Results}
For the three benchmark data sets (Adult, COMPAS, Framingham), we conduct the experiments with $\operatorname{Y}-\operatorname{A}$ combinations shown in Table \ref{table:summary_eicu}. For MIMIC-III, we show the results as $\operatorname{Y}-\operatorname{A}$ combinations (mortality-gender and prolonged length of stay (LOS)-ethnicity). For eICU, we show the results as $\operatorname{Y}-\operatorname{A}$ combination prolonged length of stay (LOS)-ethnicity. In the experiments, we find that there is no significant unfairness when we focus on prolonged length of stay prediction with sensitive attribute gender. However, we observe an obvious disparity on prolonged length of stay prediction with sensitive attribute ethnicity. We guess the phenomena are in line with common sense, since the difference in economic conditions is more reflected in race rather than gender for patients living in the ICU.

The results are shown in Figure \ref{fig:lr_result}-\ref{fig:rb_result}. We show the result on all data sets using linear model with the metric $\Delta \mathrm{xAUC}$ while we show the experiments with the metric $\Delta \mathrm{PRF}$ on the 4 data sets. For the results of post-processing algorithms with linear model under two optimization methods, we report them with a higher average AUC of the unadjusted results in our main text. It also indicates that post-processing algorithms are model agnostic which have the advantage to choose a better base model.

\begin{figure*}[h!]
	\setlength{\abovecaptionskip}{0pt}
	\setlength{\belowcaptionskip}{-0.2cm}
	\centering
	\subfigure[]{
		\centering
		\includegraphics[width=0.95\columnwidth]{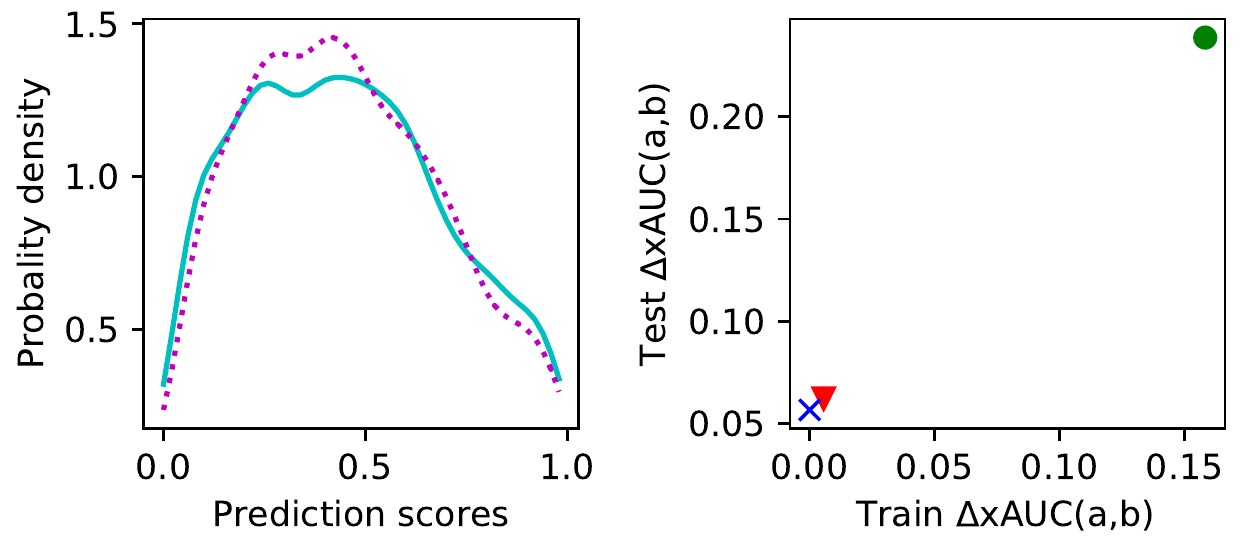}
	}%
	\subfigure[]{
		\centering
		\includegraphics[width=0.95\columnwidth]{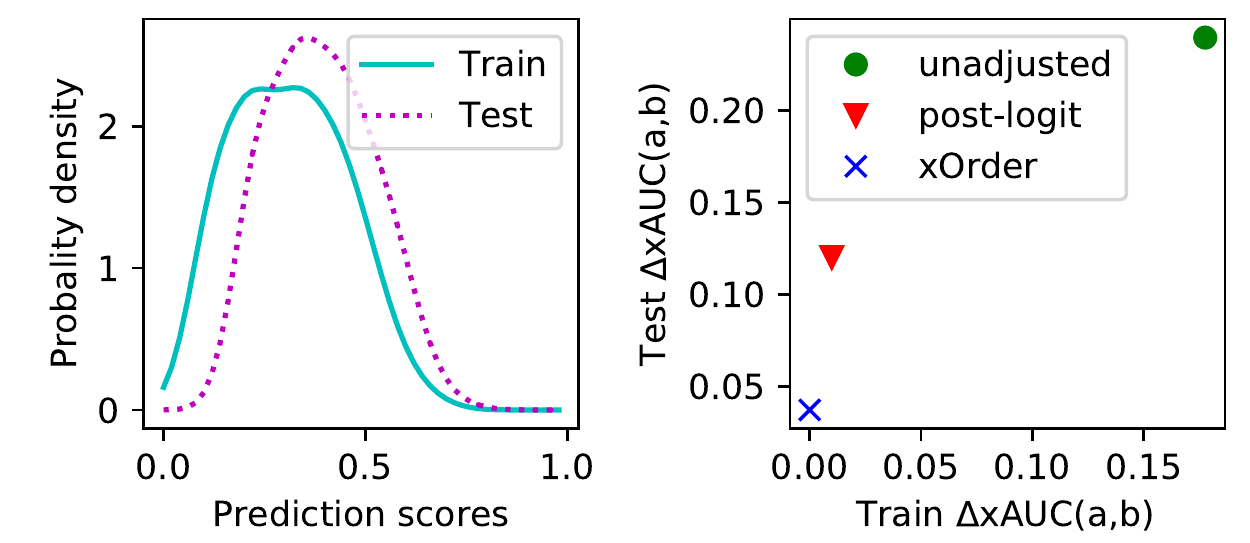}
	}%
	\caption{Result analysis on COMPAS data set with $\Delta \mathrm{xAUC}$ metric. (a) illustrates the result with linear model. (b) illustrates the result with bipartite rankboost model. The left part of each sub-figure is the distributions of prediction scores on training and test data, the right part of each sub-figure plots $\Delta \mathrm{xAUC}$ on training data test data.}
	\label{fig:ana_example}
\end{figure*}

\begin{figure*}[h]
	\setlength{\abovecaptionskip}{0pt}
	\setlength{\belowcaptionskip}{-0.2cm}
	\centering
	\subfigure[]{
		\centering
		\includegraphics[width=0.95\columnwidth]{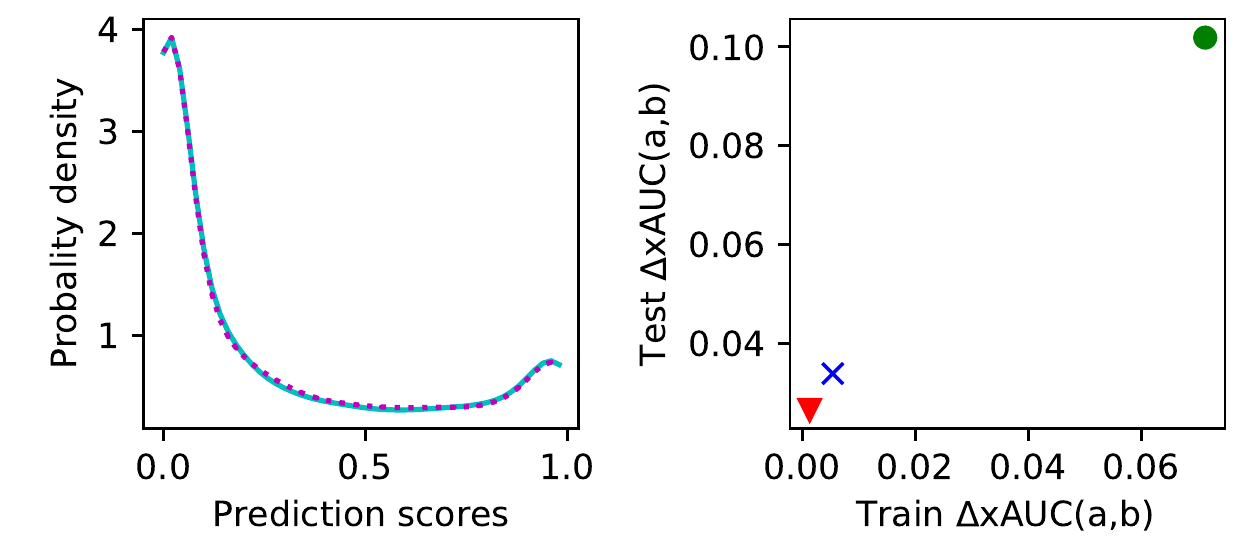}
	}%
	\subfigure[]{
		\centering
		\includegraphics[width=0.95\columnwidth]{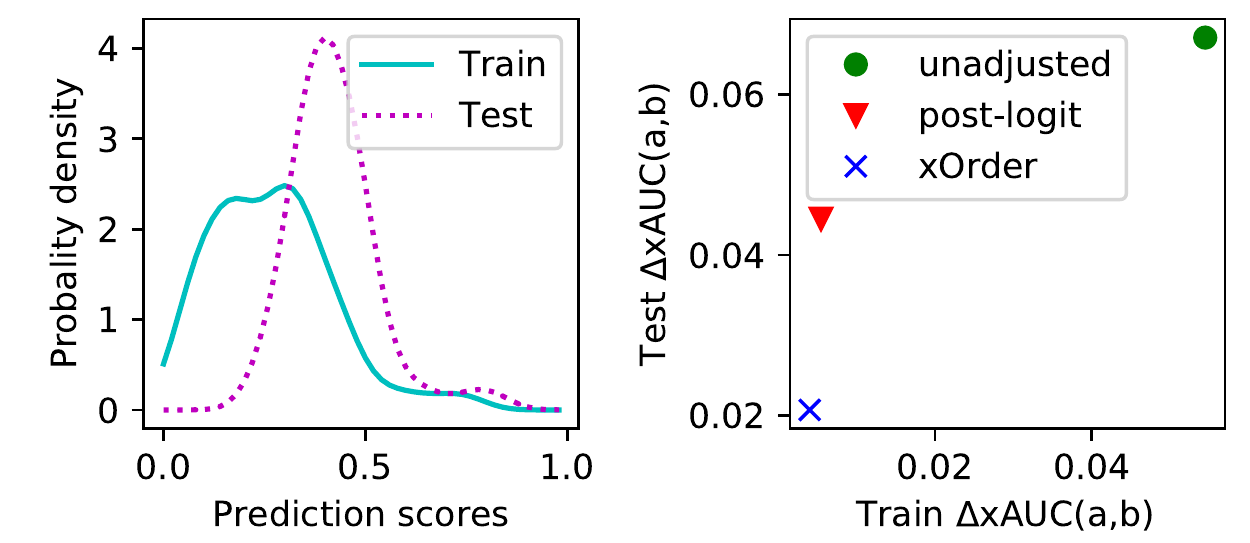}
	}%
	\caption{Result analysis on Adult data set with $\Delta \mathrm{xAUC}$ metric. (a) illustrates the result with linear model. (b) illustrates the result with bipartite rankboost model. The left part of each sub-figure is the distributions of prediction scores on training and test data, the right part of each sub-figure plots $\Delta \mathrm{xAUC}$ on training data and test data.}
	\label{fig:ana_example_adult}
\end{figure*}

\subsubsection{Linear Model with Metric $\Delta \mathrm{xAUC}$}
All four methods considering ranking fairness (\texttt{xOrder}, corr-reg, post-logit and opti-tfco) are able to obtain lower $\Delta \mathrm{xAUC}$ comparing to the unadjusted results. \texttt{xOrder} and post-logit can achieve $\Delta \mathrm{xAUC}$ close to zero on almost all the datasets. This supports proposition 1 empirically that post-processing by changing cross-group ordering has the potential to achieve $\Delta \mathrm{xAUC}$ closed to zero. corr-reg and opti-tfco fail to obtain results with low ranking disparities on COMPAS and MIMIC-III. One possible reason is that the correlation regularizer is only an approximation of ranking disparity.  
Another observation is that \texttt{xOrder} can achieve a better trade-off between utility and fairness. It can obtain competitive or obviously better $\mathrm{AUC}$ under the same level of $\Delta \mathrm{xAUC}$ when compared with other methods, especially in the regiment with low $\Delta \mathrm{xAUC}$. corr-reg performs worse than other methods in COMPAS and MIMIC-III (Prolonged LOS-Ethnicity), while post-logit performs worse in Adult and Fragmingham and opti-tfco performs worse in COMPAS and MIMIC-III (Mortality-Gender). \texttt{xOrder} performs well consistently on all data sets.

\subsubsection{Linear Model with Metric $\Delta \mathrm{PRF}$}
Figure \ref{fig:pr_result} illustrates the results with linear model and $\Delta \mathrm{PRF}$ metric on four data sets. The findings are similar to those in Figure \ref{fig:lr_result}, since $\Delta \mathrm{PRF}$ and $\Delta \mathrm{xAUC}$ are highly correlated on these data sets. Although the regularization term in corr-reg is related to the definition of $\Delta \mathrm{PRF}$, \texttt{xOrder} maintains its superiority over corr-reg. For opti-tfco, it fails to reduce the disparity on Adult and MIMIC-III(Mortality-Gender).

\subsubsection{Rankboost with Metric $\Delta \mathrm{xAUC}$}
The results on bipartite rankboost model and the associated $\Delta \mathrm{xAUC}$ are shown in Figure \ref{fig:rb_result}. It can still be observed that \texttt{xOrder} achieves higher $\mathrm{AUC}$ than post-logit under the same $\Delta \mathrm{xAUC}$. Moreover, post-logit cannot achieve $\Delta \mathrm{xAUC}$ as low as \texttt{xOrder} on adult and COMPAS. This is because the distributions of the prediction scores from the bipartite rankboost model on training and test data are significantly different. Therefore, the function in post-logit which achieves equal $\mathrm{xAUC}$ on training data may not generalize well on test data. Our method is robust against such differences. We empirically illustrate the relation between the distribution of the prediction ranking scores and the generalization ability of post-progressing algorithms in the next section.

\begin{figure}[htbp]
	\setlength{\abovecaptionskip}{0.2cm}
	\setlength{\belowcaptionskip}{-0.5cm}
	\centering
	\includegraphics[width=0.9\columnwidth]{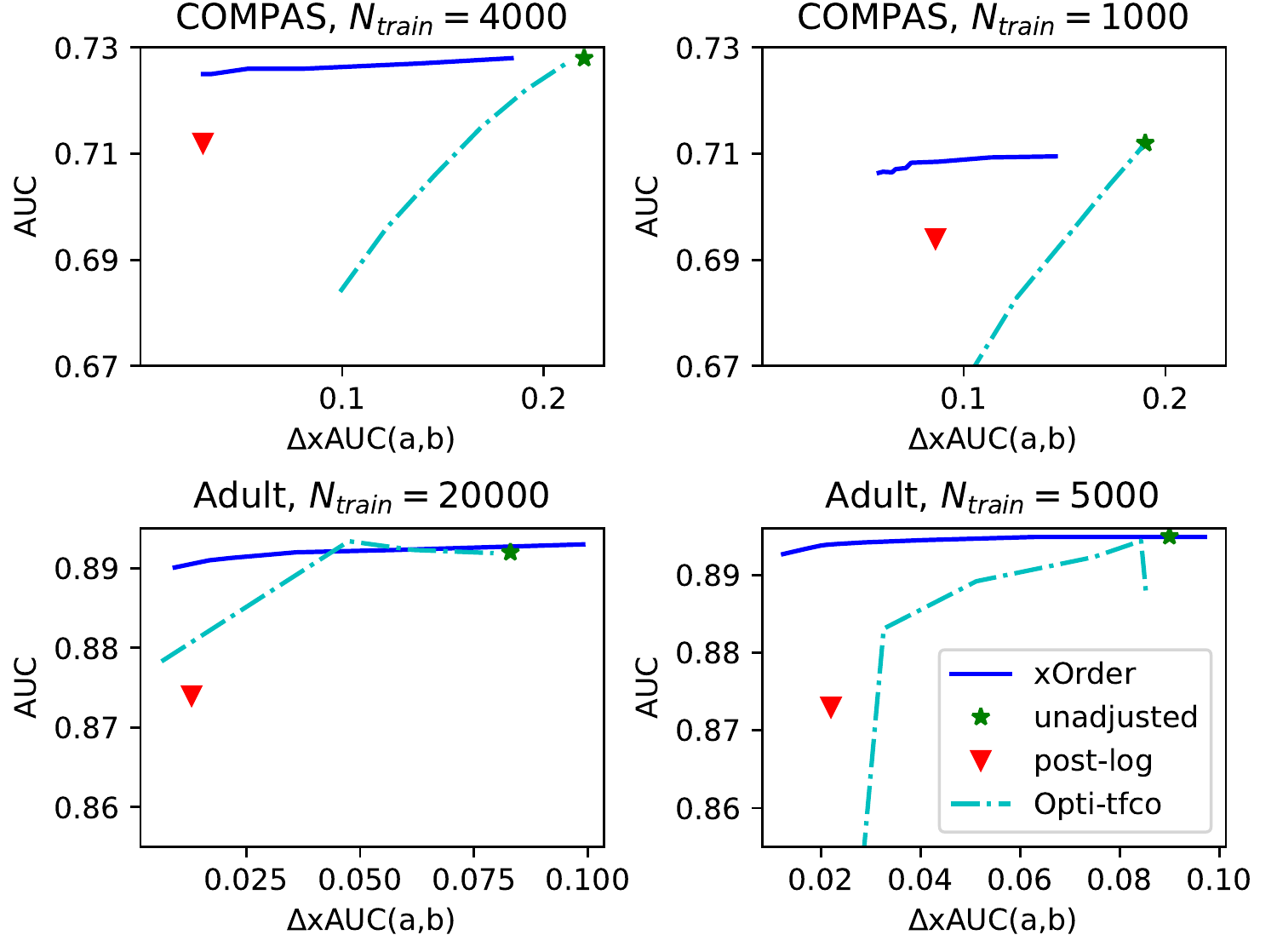}
	\caption{$\mathrm{AUC}$-$\mathrm{\Delta xAUC}$ trade-offs using linear model optimizing AUC with fewer training sample.}
	\label{fig:tr_num_ana}
	\vspace{-.5em}
\end{figure}

\section{Robustness Discussion}

\noindent \textbf{fewer training samples.} To assess the performance of our algorithm given fewer training samples, we conduct the experiments with the different number of training samples, in which we select linear model as the base model with the optimization framework in \cite{narasimhan2020pairwise}. From Figure \ref{fig:tr_num_ana}, as the number of training samples changed from 4000 to 1000, \texttt{xOrder} achieves a lower disparity compared to baselines. Meanwhile, \texttt{xOrder} still achieves a lower disparity while maintains a maximum algorithm utility. Experiment results on Adult data set shown in Figure \ref{fig:tr_num_ana} also confirm the statement. As the number of training samples reduced to 5000, \texttt{xOrder} realizes the lowest disparity($\Delta \mathrm{xAUC} = 0.01$) compared to baselines($\Delta \mathrm{xAUC} = 0.02$). Similarly, \texttt{xOrder} keeps the minimum disparity while maintains the maximum AUC on the 4 experiments. From the two series experiments on COMPAS and Adult, \texttt{xOrder} has stable performances and maintains its advantages that realize a maximum algorithm utility and a minimum ranking disparity.

\noindent \textbf{different score distributions.} To compare the robustness of the two post-processing algorithms \texttt{xOrder} and post-logit when faced with the difference between training and test ranking score distributions, we implement experiments on two data sets with 2 base models (linear model and RankBoost) on $\Delta \mathrm{xAUC}$ metric. According to Figure \ref{fig:rb_result}, post-logit fails to achieve $\Delta \mathrm{xAUC}$ as low as \texttt{xOrder} with bipartite rankboost model, while both methods can achieve low $\Delta \mathrm{xAUC}$ with linear model as shown in Figure \ref{fig:lr_result}. To analyze this phenomenon, we use COMPAS and adult as examples in Figure \ref{fig:ana_example} and \ref{fig:ana_example_adult}. For different models, we illustrate the distributions of prediction scores $\operatorname{S}$ on training and test data. We further plot $\Delta \mathrm{xAUC}$ on training data versus $ \Delta \mathrm{xAUC}$ on test data. With linear model, the distributions of $\operatorname{S}$ on training and test data are closed to each other. In this situation, the transform relations learnt from post-logit and \texttt{xOrder} can both obtain results with low $\Delta \mathrm{xAUC}$ on test data. While the distributions of the scores $\operatorname{S}$ on training and test data become different, the function learned from post-logit may not be generalized well on test data to achieve low $\Delta \mathrm{xAUC}$. Similar results can be observed on adult data set in Figure \ref{fig:ana_example_adult}. These phenomena occur in repeat experiments on both data sets. We guess the reason is that \texttt{xOrder} adjusts the relative ordering and it is more flexible than post-logit which optimizes a logistic class function. According to the experiments, \texttt{xOrder} is more robust to such distribution difference.

\section{Conclusion}
In this paper, we investigate the problem of algorithmic fairness. Given an imperfect model, we propose a general post-processing framework, \texttt{xOrder}, to achieve a good balance between ranking fairness and model utility by direct adjustment to the cross group ranking orders. We formulate \texttt{xOrder} as an optimization problem and propose a dynamic programming process to solve it. Empirical results on both benchmark and real world medical data sets demonstrated that \texttt{xOrder} can achieve a low ranking disparity while keeping a maximum algorithm utility.

\begin{acks}
Sen Cui, Weishen Pan and Changshui Zhang would like to acknowledge the funding by the National Key Research and Development Program of China (No. 2018AAA0100701) and Beijing Academy of Artificial Intelligence (BAAI). Fei Wang would like to acknowledge the support from Amazon Web Service (AWS) Machine Learning for Research Award and Google Faculty Research Award.
\end{acks}


\bibliographystyle{ACM-Reference-Format}
\bibliography{main}

\clearpage
\appendix

\onecolumn
\section{Analysis on AUC, xAUC and PRF}
\subsection{Decomposing AUC into xAUC and iAUC}
In fact, the AUC of the risk function can be decomposed into xAUC and iAUC, while iAUC means the probability of positive instances rank above negative instances in the same group:

\begin{equation}
\begin{aligned}
&\mathrm{AUC}=\frac{1}{k} \cdot \left(k_{a} \cdot  \mathrm{iAUC}(a)+k_{a, b} \cdot \mathrm{xAUC}(a,b)+k_{b, a} \cdot \mathrm{xAUC}(b,a)+k_{b} \cdot \mathrm{iAUC}(b)\right) \\
& \mathrm{iAUC}(a) = \operatorname{Pr}\left[\operatorname{S}_{1}^{a}>\operatorname{S}_{0}^{a}\right] = \frac{1}{k_{a}} \sum\nolimits_{i: i \in a, \operatorname{Y}_{i}=1} \sum\nolimits_{j: j \in a, \operatorname{Y}_{j}=0} \mathbb{I}\left[R\left(\operatorname{X}_{i}, a\right)>R\left(\operatorname{X}_{j}, a\right)\right]\\
& \mathrm{iAUC}(b) = \operatorname{Pr}\left[\operatorname{S}_{1}^{b}>\operatorname{S}_{0}^{b}\right] = \frac{1}{k_{b}} \sum\nolimits_{i: i \in b, \operatorname{Y}_{i}=1} \sum\nolimits_{j: j \in b, \operatorname{Y}_{j}=0} \mathbb{I}\left[R\left(\operatorname{X}_{i}, b\right)>R\left(\operatorname{X}_{j}, b\right)\right]\\
& \mathrm{xAUC}(a, b)=\operatorname{Pr}\left[\operatorname{S}_{1}^{a}>\operatorname{S}_{0}^{b}\right] = \frac{1}{k_{a,b}} \cdot \sum\nolimits_{i:i\in a,\operatorname{Y}_{i}=1}\sum\nolimits_{j: j\in b, \operatorname{Y}_{j}=0}\mathbb{I} \left[R(\operatorname{X}_{i},a) > R(\operatorname{X}_{j},b)\right]\\
& \mathrm{xAUC}(b,a)=\operatorname{Pr}\left[\operatorname{S}_{1}^{b}>\operatorname{S}_{0}^{a}\right] = \frac{1}{k_{b,a}} \cdot \sum\nolimits_{i:i\in b,\operatorname{Y}_{i}=1}\sum\nolimits_{j: j\in a, \operatorname{Y}_{j}=0}\mathbb{I} \left[R(\operatorname{X}_{i},b) > R(\operatorname{X}_{j},a)\right]
\end{aligned}
\label{eq:AUC_decom}
\end{equation}

in which $k_{a,b}$, $k_{b,a}$ are the same as in the main text while $k = n_{0}n_{1}$, $k_{a} = n^{a}_{0}n^{a}_{1}$, $k_{b} = n^{b}_{0}n^{b}_{1}$.

\subsection{Analysis on PRF}
From the decomposition Eq.~\ref{eq:AUC_decom}, the metric of PRF can be decomposed into xAUC and iAUC as follows:

\begin{equation}
\begin{aligned}
& \mathrm{PRF}(a) = \operatorname{Pr}[\operatorname{S}_{1}^{a}>\operatorname{S}_{0}] = \frac{1}{n^{a}_{1}n_{0}} \cdot \sum\nolimits_{i:i\in a,\operatorname{Y}_{i}=1}\sum\nolimits_{j:\operatorname{Y}_{j}=0}\mathbb{I} \left[R(\operatorname{X}_{i},a) > R(\operatorname{X}_{j})\right] \\
& \mathrm{PRF}(b) = \operatorname{Pr}[\operatorname{S}_{1}^{b}>\operatorname{S}_{0}] = \frac{1}{n^{b}_{1}n_{0}} \cdot \sum\nolimits_{i:i\in b,\operatorname{Y}_{i}=1}\sum\nolimits_{j:\operatorname{Y}_{j}=0}\mathbb{I} \left[R(\operatorname{X}_{i},a) > R(\operatorname{X}_{j})\right]
\end{aligned}
\end{equation}

According to the Eq.~\ref{eq:AUC_decom}, the probability $\operatorname{Pr}[S_{1}^{a}>S_{0}]$, $\operatorname{Pr}[S_{1}^{b}>S_{0}]$ can be rewritten as follows:

\begin{equation}
\begin{aligned}
& \operatorname{Pr}[\mathrm{S}_{1}^{a}>\mathrm{S}_{0}] =\frac{n^{b}_{0}}{n_{0}} \cdot \mathrm{xAUC}(a,b) + \frac{n^{a}_{0}}{n_{0}} \cdot \operatorname{iAUC}(a) \\
& \operatorname{Pr}[\mathrm{S}_{1}^{b}>\mathrm{S}_{0}] = \frac{n^{a}_{0}}{n_{0}} \cdot \mathrm{xAUC}(b,a) + \frac{n^{b}_{0}}{n_{0}} \cdot \operatorname{iAUC}(b),
\end{aligned}
\end{equation}

\section{Proof of Proposition 1}
\subsection{($\Delta\mathrm{xAUC}$)}
\label{prop:1}
\begin{proof}
Denote $T^{a}_{1}(i) = \frac{\sum_{k \leq i} \mathbb{I} \left[\operatorname{Y}_{p^{a(k)}} = 1\right]}{n_1^a}$, $T^{a}_{1}(i) $ is monotonically increasing on $i$ from 0 to 1. Reversely, denote $\overline{T^{a}_{0}}(i) = \frac{\sum_{k \textgreater i} \mathbb{I} \left[\operatorname{Y}_{\operatorname{p}^{a(k)}} = 0\right]}{n_0^a}$ and $\overline{T^{a}_{0}}(i)$ is monotonically decreasing on $i$ from 1 to 0. Then $T^{a}_{1}(i) - \overline{T^{a}_{0}}(i)$ is monotonically increasing on $i$ from -1 to 1. So there exists an $i'$ that $T^{a}_{1}(i') - \overline{T^{a}_{0}}(i') < 0$ and $T^{a}_{1}(i' + 1) - \overline{T^{a}_{0}}(i' + 1) \geq 0$. Since the
increment of $T^{a}_{1}(i) - \overline{T^{a}_{0}}(i)$ when $i$ increases by 1 satisfies $\Delta{(T^{a}_{1}(i) - \overline{T^{a}_{0}}(i))} \leq \max(\frac{1}{n_1^a}, \frac{1}{n_0^a})$, we can get $-\max(\frac{1}{n_1^a}, \frac{1}{n_0^a}) \leq T^{a}_{1}(i') - \overline{T^{a}_{0}}(i') < 0$.

Consider a cross-group ordering which is generated by inserting the whole sequence $\operatorname{p}^{b}$ between $\operatorname{p}^{a(i')}$ and $\operatorname{p}^{a(i'+1)}$, this operation will result in that positive examples $\operatorname{p}^{a(k)}$ with $\mathrm{Y}_{\operatorname{p}^{a(k)}} = 1, k \leq i'$ will be ranked higher than all the negative examples in $\operatorname{p}^{b}$. And positive examples $\operatorname{p}^{a(k)}$ with $\mathrm{Y}_{\operatorname{p}^{a(k)}} = 1, k > i'$ will be ranked lower than all the negative examples in $\operatorname{p}^{b}$. Then $\mathrm{xAUC}(a,b)$ equals $T^{a}_{1}(i')$ with this cross-group ordering. Similarly, we can obtain $\mathrm{xAUC}(b,a) = \overline{T^{a}_{0}}(i')$. Then $\Delta \mathrm{xAUC} = |\overline{T^{a}_{0}}(i') - T^{a}_{1}(i')|$. According to the discussion above, $\Delta\mathrm{xAUC} \leq \max(\frac{1}{n_1^a}, \frac{1}{n_0^a})$.

If we consider the cross-group ordering generated by inserting the whole sequence $\operatorname{p}^{a}$ between $\operatorname{p}^{b(j)}$ and $\operatorname{p}^{b(j+1)}$, symmetrically  we will find that there exists a corresponding $j'$ that $\Delta\mathrm{xAUC} \leq \max(\frac{1}{n_1^b}, \frac{1}{n_0^b})$  with this cross-group ordering. We can choose  one of these two cross-group ordering operations to achieve $\Delta\mathrm{xAUC} \leq \min(\max(\frac{1}{n_1^b}, \frac{1}{n_0^b}), \max(\frac{1}{n_1^a}, \frac{1}{n_0^a}))$
\end{proof}

\subsection{($\Delta\mathrm{PRF})$}
\begin{proof}
Denote $\mathrm{C} = \frac{n_0^b}{n_0}\operatorname{iAUC(b)} - \frac{n_0^a}{n_0}\operatorname{iAUC(a)}$, we can get $-\frac{n_0^a}{n_0} \leq \mathrm{C} \leq \frac{n_0^b}{n_0}$. Since changing cross-group ordering does not affect inner-group ordering, $\mathrm{C}$ is constant for given $\operatorname{p}^{b}$ and $\operatorname{p}^{a}$.

With the same definition of $T^{a}_{1}(i)$ and $\overline{T^{a}_{0}}(i)$ in the Section \ref{prop:1}, we will have $\frac{n_0^b}{n_0}T^{a}_{1}(i) - \frac{n_0^a}{n_0}\overline{T^{a}_{0}}(i)$ is monotonically increasing on $i$ from $-\frac{n_0^a}{n_0}$ to $\frac{n_0^b}{n_0}$. Since $0 \in [-\frac{n_0^a}{n_0} - \mathrm{C}, \frac{n_0^b}{n_0} - \mathrm{C}]$, there exists $i'$ satisfying $\frac{n_0^b}{n_0}T^{a}_{1}(i') - \frac{n_0^a}{n_0}\overline{T^{a}_{0}}(i') - \mathrm{C} < 0$, $\frac{n_0^b}{n_0}T^{a}_{1}(i' + 1) - \frac{n_0^a}{n_0}\overline{T^{a}_{0}}(i' + 1) - \mathrm{C} \geq 0$ or $\frac{n_0^b}{n_0}T^{a}_{1}(i') - \frac{n_0^a}{n_0}\overline{T^{a}_{0}}(i') - \mathrm{C} \leq 0$, $\frac{n_0^b}{n_0}T^{a}_{1}(i' + 1) - \frac{n_0^a}{n_0}\overline{T^{a}_{0}}(i' + 1) - \mathrm{C} > 0$. Since the increment of $\frac{n_0^b}{n_0}T^{a}_{1}(i) - \frac{n_0^a}{n_0}\overline{T^{a}_{0}}(i) - \mathrm{C}$ when $i$ increases by 1 satisfies $\Delta{(\frac{n_0^b}{n_0}T^{a}_{1}(i) - \frac{n_0^a}{n_0}\overline{T^{a}_{0}}(i) - \mathrm{C})} \leq \max(\frac{n^b_0}{n_1^a n_0}, \frac{1}{n_0})$, $-\max(\frac{n^b_0}{n_1^a n_0}, \frac{1}{n_0}) \leq \frac{n_0^a}{n_0}T^{a}_{1}(i') - \frac{n_0^b}{n_0}\overline{T^{a}_{0}}(i') - \mathrm{C} \leq \max(\frac{n^b_0}{n_1^a n_0}, \frac{1}{n_0})$.

Consider an cross-group ordering generated by inserting the whole sequence $\operatorname{p}^{b}$ between $\operatorname{p}^{a(i')}$ and $\operatorname{p}^{a(i'+1)}$. $\mathrm{xAUC}(a,b) = T^{a}_{1}(i')$ and $\mathrm{xAUC}(b,a) = \overline{T^{a}_{0}}(i')$. As in the calculation of $\Delta\operatorname{PRF} = |\frac{n_0^b}{n_0}\mathrm{xAUC}(a,b)- \frac{n_0^a}{n_0}\mathrm{xAUC}(b,a) - (\frac{n_0^b}{n_0}\operatorname{iAUC}(b) - \frac{n_0^a}{n_0}\operatorname{iAUC}(a)) | = |\frac{n_0^b}{n_0}T^{a}_{1}(i') - \frac{n_0^a}{n_0}\overline{T^{a}_{0}}(i') - \mathrm{C}|$, we can get $\Delta\operatorname{PRF} \leq \max(\frac{n^b_0}{n_1^a n_0}, \frac{1}{n_0})$.

If we consider the cross-group ordering generated by inserting the whole sequence $\operatorname{p}^{a}$ between $\operatorname{p}^{b(j)}$ and $\operatorname{p}^{b(j+1)}$, symmetrically we will find there exists a corresponding $j'$ that $\Delta\operatorname{PRF} \leq \max(\frac{n^a_0}{n_1^b n_0}, \frac{1}{n_0})$. We can choose one from these two cross-group ordering operations and achieve $\Delta \operatorname{PRF} \leq \min(\max(\frac{n^b_0}{n_1^a n_0}, \frac{1}{n_0}), \max(\frac{n^a_0}{n_1^b n_0}, \frac{1}{n_0}))$.
\end{proof}

\section{Analysis on Proposition 2}
\subsection{Proof of Proposition 2}
\begin{proof}
As we keep the with-in group ordering invariant, iAUC(a) and iAUC(b) remain the same after post-processing procedure. For the objective function:
\begin{equation}
\begin{aligned}
\label{eq:optimizationAUC}
J(o(\operatorname{p}^{a}, \operatorname{p}^{b}))=\text {AUC}(o(\operatorname{p}^{a}, \operatorname{p}^{b}))-\lambda \cdot\Delta\mathrm{xAUC}(o(\operatorname{p}^{a}, \operatorname{p}^{b}))
\end{aligned}
\end{equation}

the item $\text{AUC}(o(\operatorname{p}^{a}, \operatorname{p}^{b}))$ can be decomposed according to Eq.~\ref{eq:AUC_decom}. We subtract the constant part $k_{a} \cdot  \mathrm{iAUC(a)}$, $k_{b} \cdot  \mathrm{iAUC(b)}$ and multiply the formula by a constant $k$:

\begin{equation}
\begin{aligned}
J(o(\operatorname{p}^{a}, \operatorname{p}^{b})) = \frac{1}{k} (k_{a,b}  \operatorname{xAUC}(o(\operatorname{p}^{a}, \operatorname{p}^{b})) + k_{b,a}  \operatorname{xAUC}(o(\operatorname{p}^{b}, \operatorname{p}^{a})) - \lambda k  \Delta\mathrm{xAUC}(o(\operatorname{p}^{a}, \operatorname{p}^{b}))) + \mathrm{C}
\end{aligned}
\label{eq:J_decom}
\end{equation}

where $\mathrm{C}$ is constant $\mathrm{C} = \frac{k_a}{k} \cdot \mathrm{iAUC}(\operatorname{p}^{a}) + \frac{k_b}{k} \cdot \mathrm{iAUC}(\operatorname{p}^{b})$

From Eq.~\ref{eq:J_decom}, the target Eq.~\ref{eq:optimizationAUC} is equivalent to:
\begin{equation}
\begin{aligned}
\label{eq:G}
\mathrm{G}(o(\operatorname{p}^{a}, \operatorname{p}^{b}))=  k_{a,b} \cdot \mathrm{xAUC}(o(\operatorname{p}^{a}, \operatorname{p}^{b})) + k_{b,a} \cdot \mathrm{xAUC}(o(\operatorname{p}^{b}, \operatorname{p}^{a})) - \lambda \cdot k \cdot\Delta\mathrm{xAUC}(o(\operatorname{p}^{a}, \operatorname{p}^{b})).
\end{aligned}
\end{equation}
\end{proof}

Consider the definition of $\widehat{\operatorname{G}}(o(\operatorname{p}^{a(:i)}, \operatorname{p}^{b(:j)}))$ in main text Eq. (9) in which $\widehat{\operatorname{G}}(o(\operatorname{p}^{a(:i)}, \operatorname{p}^{b(:j)}))$ is induced by $\operatorname{xAUC}(o(\operatorname{p}^{a(:i)}, \operatorname{p}^{b}))$ and $\operatorname{xAUC}(o(\operatorname{p}^{b(:j)}, \operatorname{p}^{a}))$. The cross-group ordering $o(\operatorname{p}^{a(:i)}, \operatorname{p}^{b})$ means appending the sequence $\operatorname{p}^{b(j+1:n^{b})}$ to the given $o(\operatorname{p}^{a(:i)}, \operatorname{p}^{b(:j)})$. The meanings of partial $\operatorname{xAUC}(o(\operatorname{p}^{a(:i)}, \operatorname{p}^{b}))$ is as follows:

\begin{equation}
\label{eq:partxAUC}
\operatorname{xAUC}(o(\operatorname{p}^{a(:i)}, \operatorname{p}^{b})) = \frac{1}{k_{a,b}}\cdot \sum\nolimits_{k:k\leqslant i,\mathrm{Y}_{\operatorname{p}^{a(k)}}=1}\sum\nolimits_{h:h\leqslant n^{b},\mathrm{Y}_{\operatorname{p}^{a(h)}}=0}\mathbb{I} \left[\operatorname{p}^{a(k)} \succ \operatorname{p}^{b(h)}\right].
\end{equation}

\subsection{Proposition 2 and the Objective $\operatorname{\widehat{G}}$}
Considering the definition of $\widehat{\operatorname{G}}(o(\operatorname{p}^{a(:i)}, \operatorname{p}^{b(:j)}))$ in main text Eq. (9) in which $\operatorname{\widehat{G}}(o(\operatorname{p}^{a(:i)}, \operatorname{p}^{b(:j)}))$ is induced by $\operatorname{xAUC}(o(\operatorname{p}^{a(:i)}, \operatorname{p}^{b(:j)}) \textcircled{+} \operatorname{p}^{b(j+1:n^{b})})$ and $\operatorname{xAUC}(o(\operatorname{p}^{b(:j)}, \operatorname{p}^{a(:i)})  \textcircled{+} \operatorname{p}^{b(j+1:n^{b})})$, the cross-group ordering $o(\operatorname{p}^{a(:i)}, \operatorname{p}^{b(:j)}) \textcircled{+} \operatorname{p}^{b(j+1:n^{b})}$ means appending the sequence $\operatorname{p}^{b(j+1:n^{b})}$ to the given cross-group ordering $o(\operatorname{p}^{a(:i)}, \operatorname{p}^{b(:j)})$ ($o(\operatorname{p}^{b(:j)}, \operatorname{p}^{a(:i)}) \textcircled{+} \operatorname{p}^{a(i+1:n^{a})}$ is similarly defined). For expression convenience, we use $o(\operatorname{p}^{a(:i)}, \operatorname{p}^{b})$ to replace $o(\operatorname{p}^{a(:i)}, \operatorname{p}^{b(:j)}) \textcircled{+} \operatorname{p}^{b(j+1:n^{b})}$, and $o(\operatorname{p}^{a(:i+1)}, \operatorname{p}^{b})$ means $o(\operatorname{p}^{a(:i)}, \operatorname{p}^{b(:j)}) \textcircled{+} \operatorname{p}^{a(i+1)} \textcircled{+} \operatorname{p}^{b(j+1:n^{b})}$. And the property of the partial $\operatorname{xAUC}$ is as follows:

\begin{equation}
    \begin{aligned}
        & \operatorname{xAUC}(o(\operatorname{p}^{a(:i)}, \operatorname{p}^{b}) = \frac{1}{k_{a,b}}\cdot \sum\nolimits_{k:k\leqslant i,\mathrm{Y}_{\operatorname{p}^{a(k)}}=1}\sum\nolimits_{h:h\leqslant n^{b},\mathrm{Y}_{\operatorname{p}^{a(h)}}=0}\mathbb{I} \left[\operatorname{p}^{a(k)} \succ \operatorname{p}^{b(h)}\right] \\
        & \operatorname{xAUC}(o(\operatorname{p}^{a(:i+1)}, \operatorname{p}^{b}) = \operatorname{xAUC}(o(\operatorname{p}^{a(:i)}, \operatorname{p}^{b}) + \mathbb{I} \left[ \operatorname{Y}_{\operatorname{p}^{a(i+1)}}=1\right] \cdot (\sum_{h: h > j}\mathbb{I} \left(\mathrm{Y}_{\operatorname{p}^{b(h)}}=0 \right))
    \end{aligned}
\end{equation}

Obviously, when $i = n^{a}, j = n^{b}$, $\widehat{\operatorname{G}}$ equals to $\operatorname{G}$ in Proposition 2 of the main text.

\subsection{Proposition 2 on PRF metric}
The objective function under pairwise ranking fairness metric (PRF) is as follows:
\begin{equation}
\begin{aligned}
J(o(\operatorname{p}^{a}, \operatorname{p}^{b}))=\text {AUC}(o(\operatorname{p}^{a}, \operatorname{p}^{b}))- \lambda \cdot\Delta\mathrm{PRF}(o(\operatorname{p}^{a}, \operatorname{p}^{b}))
\end{aligned}
\label{eq:optimizationPRF}
\end{equation}

As post-processing procedure does not change iAUC(a) and iAUC(b), the optimization target in Eq.~\ref{eq:optimizationPRF} is equivalent to:

\begin{equation}
\begin{aligned}
\label{eq:PRF_G}
& \text{Maxmizing: } \quad \quad \quad \mathrm{G}(o(\operatorname{p}^{a}, \operatorname{p}^{b}))\\
& \mathrm{G}(o(\operatorname{p}^{a}, \operatorname{p}^{b}))=  k_{a,b} \cdot \mathrm{xAUC}(o(\operatorname{p}^{a}, \operatorname{p}^{b})) + k_{b,a} \cdot \mathrm{xAUC}(o(\operatorname{p}^{a}, \operatorname{p}^{b})) - \lambda \cdot k \cdot \Delta\mathrm{PRF}(o(\operatorname{p}^{a}, \operatorname{p}^{b})).
\end{aligned}
\end{equation}

\section{Proof of Theorem 1}
\subsection{xOrder can achieve the global optimal solution of maximizing Eq.(6) in main text with $\lambda=0$}
\label{Proof: 3_1}
\begin{proof}
We will decompose the problem that maximizing AUC into $(n^{a}+1) \cdot (n^{b}+1)$ subproblems, and $\operatorname{\widehat{G}}$ in the main text in Eq. (9) when $\lambda=0$ is equivalent to the objective in Eq.(\ref{eq:subproblems}). We will use mathematical induction to prove the conclusion that \emph{xOrder can achieve the global optimal solution to maximize $\mathrm{AUC}(o(\operatorname{p}^{a}, \operatorname{p}^{b}))$}. For each subproblem given $i,j$ with $0 \leq i \leq n^a, 0 \leq j \leq n^b$, the optimization target to maximize is:

\begin{equation}
\begin{aligned}
\begin{split}
\widehat{\operatorname{G}}(o(\operatorname{p}^{a(:i)}, \operatorname{p}^{b(:j)})) = \sum\nolimits_{k:k\leqslant i,\mathrm{Y}_{\operatorname{p}^{a(k)}}=1}\sum\nolimits_{h:h\leqslant n^{b},\mathrm{Y}_{\operatorname{p}^{b(h)}}=0}\mathbb{I} \left[\operatorname{p}^{a(k)} \succ \operatorname{p}^{b(h)}\right] +\\
\sum\nolimits_{k:k\leqslant n^a,\mathrm{Y}_{\operatorname{p}^{a(k)}}=0}\sum\nolimits_{h:h\leqslant j,\mathrm{Y}_{\operatorname{p}^{b(h)}}=1}\mathbb{I} \left[\operatorname{p}^{b(h)} \succ \operatorname{p}^{a(k)}\right]
\end{split}
\end{aligned}
\label{eq:subproblems}
\end{equation}

For any $0 \leq i \leq n_a, 0 \leq j \leq n_b$, according to the property in Eq. ~\ref{eq:property}, the update equation when $\operatorname{p}^{a(:i+1)}$ is appended to $o(\operatorname{p}^{a(:i)}, \operatorname{p}^{b(:j)})$:

\begin{equation}
\begin{aligned}
& \widehat{\operatorname{G}}(o(\operatorname{p}^{a(:i)}, \operatorname{p}^{b(:j)}) \textcircled{+} \operatorname{p}^{a(:i+1)} ) = \widehat{\operatorname{G}}(o(\operatorname{p}^{a(:i)}, \operatorname{p}^{b(:j)})) + \mathbb{I} \left[ \operatorname{Y}_{\operatorname{p}^{a(i+1)}}=1\right] \cdot (\sum_{h: h > j}\mathbb{I} \left(\mathrm{Y}_{\operatorname{p}^{b(h)}}=0 \right)).
\end{aligned}
\label{eq:property}
\end{equation}

Consider two trivial cases of $o(\operatorname{p}^{a(:0)}, \operatorname{p}^{b(:j)})$ ($0<j \leq n^{b}$) and $o(\operatorname{p}^{a(:i)}, \operatorname{p}^{b(:0)})$ ($0 < i \leq n^{a}$), there is only one possible path. The unique solution is obtained at the initialized stage of \texttt{xOrder} algorithm.

For any given $i$ and $j$ satisfying $i > 0$ and $j > 0$, suppose \texttt{xOrder} has got the optimal solutions $o^{*}(\operatorname{p}^{a(:i-1)}, \operatorname{p}^{b(:j)})$ and $o^{*}(\operatorname{p}^{a(:i)}, \operatorname{p}^{b(:j-1)}))$ of the subproblems to maximize $\widehat{\operatorname{G}}(o(\operatorname{p}^{a(:i-1)}, \operatorname{p}^{b(:j)}))$ and  $\widehat{\operatorname{G}}(o(\operatorname{p}^{a(:i)}, \operatorname{p}^{b(:j-1)}))$ respectively.

Now, we will prove the solution $o^{*}(\operatorname{p}^{a(:i)}, \operatorname{p}^{b(:j)})$ returned by \texttt{xOrder} is optimal by contradiction. Suppose there exists $\overline{o}(\operatorname{p}^{a(:i)}, \operatorname{p}^{b(:j)})$ satisfying $\widehat{\operatorname{G}}(\overline{o}(\operatorname{p}^{a(:i)}, \operatorname{p}^{b(:j)})) > \widehat{\operatorname{G}}(o^{*}(\operatorname{p}^{a(:i)}, \operatorname{p}^{b(:j)}))$. There are two possible situations: $\overline{o}(\operatorname{p}^{a(:i)}, \operatorname{p}^{b(:j)})$ ends with 1).$\operatorname{p}^{a(i)}$, 2).$\operatorname{p}^{b(j)}$. Without the loss of generality, we assume $\overline{o}(\operatorname{p}^{a(:i)}, \operatorname{p}^{b(:j)})$ ends with $\operatorname{p}^{a(i)}$ and define $\overline{o}(\operatorname{p}^{a(:i-1)}, \operatorname{p}^{b(:j)})$ by $\overline{o}(\operatorname{p}^{a(:i)}, \operatorname{p}^{b(:j)}) = \overline{o}(\operatorname{p}^{a(:i-1)}, \operatorname{p}^{b(:j)}) \textcircled{+} \operatorname{p}^{a(:i)}$. We can get the following inequation:

\begin{equation}
\begin{aligned}
& \widehat{\operatorname{G}}(\overline{o}(\operatorname{p}^{a(:i-1)}, \operatorname{p}^{b(:j)})) = \widehat{\operatorname{G}}(\overline{o}(\operatorname{p}^{a(:i)}, \operatorname{p}^{b(:j)})) - \mathbb{I} \left[ \operatorname{Y}_{\operatorname{p}^{a(i)}}=1\right] \cdot (\sum_{h: h > j}\mathbb{I} \left(\mathrm{Y}_{\operatorname{p}^{b(h)}}=0 \right)) \\
& > \widehat{\operatorname{G}}(o^{*}(\operatorname{p}^{a(:i)}, \operatorname{p}^{b(:j)})) - \mathbb{I} \left[ \operatorname{Y}_{\operatorname{p}^{a(i)}}=1\right] \cdot (\sum_{h: h > j}\mathbb{I} \left(\mathrm{Y}_{\operatorname{p}^{b(h)}}=0 \right))\\
& \geq \widehat{\operatorname{G}}(o^*(\operatorname{p}^{a(:i-1)}, \operatorname{p}^{b(:j)})\textcircled{+} \operatorname{p}^{a(i)}) - \mathbb{I} \left[ \operatorname{Y}_{\operatorname{p}^{a(i)}}=1\right] \cdot (\sum_{h: h > j}\mathbb{I} \left(\mathrm{Y}_{\operatorname{p}^{b(h)}}=0 \right)),\\
& = \widehat{\operatorname{G}}({o}^{*}(\operatorname{p}^{a(:i-1)}, \operatorname{p}^{b(:j)}))
\end{aligned}
\end{equation}

\noindent where $\geq$ in the third line holds due to the update process of \texttt{xOrder} in the main text Eq. (8). This violates the assumption that $o^{*}(\operatorname{p}^{a(:i-1)}, \operatorname{p}^{b(:j)}))$ is opitmal. For the situation that $\overline{o}(\operatorname{p}^{a(:i)}, \operatorname{p}^{b(:j)})$ ends with $\operatorname{p}^{b(j)}$, similarly we can derive $\widehat{\operatorname{G}}(\overline{o}(\operatorname{p}^{a(:i)}, \operatorname{p}^{b(:j-1)})) > \widehat{\operatorname{G}}({o}^{*}(\operatorname{p}^{a(:i)}, \operatorname{p}^{b(:j-1)}))$, which violates the assumption $o^{*}(\operatorname{p}^{a(:i)}, \operatorname{p}^{b(:j-1)}))$ is optimal. Summarizing the deduction above, we can get $o^{*}(\operatorname{p}^{a(:i)}, \operatorname{p}^{b(:j)})$ must be optimal.

Due to the generality of $i$ and $j$, $o^{*}(\operatorname{p}^{a}, \operatorname{p}^{b})$ with $i = n^a$ and $j = n^b$ is the global optimal solution to maximize  $\mathrm{AUC}(o(\operatorname{p}^{a}, \operatorname{p}^{b}))$.
\end{proof}

\subsection{xOrder has the upper bounds of fairness disparities stated in Eq.(10) in main text as $\lambda$ approaches infinity}

In the beginning, we will prove that \texttt{xorder} has the disparity upper bound $\Delta \mathrm{x} \mathrm{AUC} \leq \max \left(\frac{1}{n_{1}^{a}}, \frac{1}{n_{1}^{b}}\right)$. We also decompose the problem in Eq.(6) into $(n^a+1) \cdot (n^b+1)$ subproblems as the last proof does. As $\lambda$ approaches infinity, maxmizing the objective in Eq.(9) means minimizing $\Delta \mathrm{x} \mathrm{AUC}$ exactly and is equivalent to maximizing the objective in Eq.(\ref{eq:lambdainfinity}).

\begin{equation}
\begin{aligned}
&\widehat{\operatorname{G}}(o(\operatorname{p}^{a(:i-1)}, \operatorname{p}^{b(:j)})) = - \left|\widehat{\operatorname{H}}(o(\operatorname{p}^{a(:i-1)}, \operatorname{p}^{b(:j)}))  \right|\\
&\widehat{\operatorname{H}}(o(\operatorname{p}^{a(:i-1)}, \operatorname{p}^{b(:j)}))  = \sum\nolimits_{k:k\leqslant i,\mathrm{Y}_{\operatorname{p}^{a(k)}}=1}\sum\nolimits_{h:h\leqslant n^{b},\mathrm{Y}_{\operatorname{p}^{b(h)}}=0}\mathbb{I} \left[\operatorname{p}^{a(k)} \succ \operatorname{p}^{b(h)}\right] - \\
&\sum\nolimits_{k:k\leqslant n^a,\mathrm{Y}_{\operatorname{p}^{a(k)}}=0}\sum\nolimits_{h:h\leqslant j,\mathrm{Y}_{\operatorname{p}^{b(h)}}=1}\mathbb{I} \left[\operatorname{p}^{b(h)} \succ \operatorname{p}^{a(k)}\right]
\label{eq:lambdainfinity}
\end{aligned}
\end{equation}

\begin{definition}[greedy forward search algorithm] For any $0 \leq i \leq n^a$, $0 \leq j \leq n^b$, given $\overline{o}(\operatorname{p}^{a(:i)}, \operatorname{p}^{b(:j)})$, the update function of greedy forward search algorithm is as follows:

\begin{equation}
\label{eq:greedy}
\begin{aligned}
& \text{Given} \quad o^{*}(\operatorname{p}^{a(:i)}, \operatorname{p}^{b(:j)})\\
& \text{if:}\ \widehat{\operatorname{G}}(o^*(\operatorname{p}^{a(:i)}, \operatorname{p}^{b(:j)})\textcircled{+} \operatorname{p}^{a(i+1)}) \geq \widehat{\operatorname{G}}(o^*(\operatorname{p}^{a(:i)}, \operatorname{p}^{b(:j)})\textcircled{+} \operatorname{p}^{b(j+1)})\\
& \quad \quad \text{update: } \quad o^{*}(\operatorname{p}^{a(:i+1)}, \operatorname{p}^{b(:j)}) = o^{*}(\operatorname{p}^{a(:i)}, \operatorname{p}^{b(:j)})\textcircled{+} \operatorname{p}^{a(i+1)};\\
& \text{otherwise:}\\
& \quad \quad \text{update: } \quad o^{*}(\operatorname{p}^{a(:i)}, \operatorname{p}^{b(:j+1)}) = o^{*}(\operatorname{p}^{a(:i)}, \operatorname{p}^{b(:j)}) \textcircled{+} \operatorname{p}^{b(j+1)},
\end{aligned}
\end{equation}
\end{definition}
Using the algorithm defined above, we can find an optimal path to minimize the xAUC diparity in a greedy manner.

\begin{lemma}
greedy search forward algorithm can achieve the upper bound $\Delta \mathrm{x} \mathrm{AUC} \leq \max \left(\frac{1}{n_{1}^{a}}, \frac{1}{n_{1}^{b}}\right)$.
\label{lm:greedyproof}
\end{lemma}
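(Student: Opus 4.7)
The plan is to prove the bound by an amortized argument that tracks the greedy algorithm one insertion at a time. Write $f_{i,j}$ and $g_{i,j}$ for the two partial xAUC quantities appearing in $\widehat{G}$ at state $(i,j)$ in Eq.~(9), set $D_{i,j} = f_{i,j} - g_{i,j}$, and let $M = \max(1/n_{1}^{a}, 1/n_{1}^{b})$. Because $f_{0,0} = g_{0,0} = 0$ and $|D_{n^{a},n^{b}}| = \Delta\mathrm{xAUC}$, it suffices to establish the invariant $|D_{i,j}| \le M$ along the greedy trajectory from $(0,0)$ to $(n^{a},n^{b})$.

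The first step is a direct count of the cross-group pairs created by a single insertion. Appending $p^{a(i+1)}$ to the partial ordering at $(i,j)$ shifts $f$ by $N^{b}_{0,>j}/k_{a,b}$ when $Y_{p^{a(i+1)}}=1$ and by $0$ otherwise, while $g$ is unchanged; symmetrically, appending $p^{b(j+1)}$ shifts $g$ by $N^{a}_{0,>i}/k_{b,a}$ when $Y_{p^{b(j+1)}}=1$ and by $0$ otherwise, where $N^{b}_{0,>j}$ and $N^{a}_{0,>i}$ count the negatives remaining in the unprocessed tails $p^{b(j+1:n^{b})}$ and $p^{a(i+1:n^{a})}$. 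Using $N^{b}_{0,>j} \le n_{0}^{b}$ and $N^{a}_{0,>i} \le n_{0}^{a}$ together with $k_{a,b} = n_{1}^{a} n_{0}^{b}$ and $k_{b,a} = n_{1}^{b} n_{0}^{a}$, the per-step change in $D$ is either $0$, or lies in $(0, 1/n_{1}^{a}]$ (for an $a$-positive insertion), or in $[-1/n_{1}^{b}, 0)$ (for a $b$-positive insertion).

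The second step is a short case analysis that drives the induction. If either candidate carries label $0$, inserting it leaves $D$ fixed, so the greedy rule maintains $|D| \le M$. If both candidates are positive, with step sizes $\delta_{a} \le 1/n_{1}^{a}$ and $\delta_{b} \le 1/n_{1}^{b}$, assume without loss of generality that $D \ge 0$: inserting the $b$-positive yields $|D-\delta_{b}| \le \max(D,\delta_{b}) \le M$, and the greedy rule picks at least as good a move, so $|D|$ stays within $M$. The mixed cases (one candidate positive, one candidate negative) are handled identically by noting that the negative-label move always keeps $|D|$ constant, so the greedy rule can avoid overshooting whenever the positive-label move would push $|D|$ past $M$. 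Evaluating the invariant at the terminal node gives $\Delta\mathrm{xAUC} \le M$.

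The main obstacle I anticipate is reconciling the analysis with the greedy criterion actually executed by the algorithm, which minimizes $|\widehat{H}| = |k_{a,b} f - k_{b,a} g|$ rather than the target quantity $|f-g|$. When $k_{a,b} \neq k_{b,a}$ the two criteria can prefer different moves at some states, so one must redo the case analysis in the $\widehat{H}$ coordinate, tracking the raw (unnormalized) increments $N^{b}_{0,>j}$ and $-N^{a}_{0,>i}$, and then translate the resulting bound on $|\widehat{H}_{n^{a},n^{b}}|$ back to a bound on $|f-g|$ using the fact that at the terminal state all positives have been placed. Verifying that this rescaling preserves the clean $\max(1/n_{1}^{a}, 1/n_{1}^{b})$ constant, rather than degrading it by a factor involving $k_{a,b}/k_{b,a}$, is where I expect to spend the most effort.
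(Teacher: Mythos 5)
Your argument is correct and is essentially the paper's own proof: both track the signed difference of the two partial xAUC quantities along the greedy trajectory, bound each insertion's effect by $1/n_{1}^{a}$ (for an $a$-positive) or $-1/n_{1}^{b}$ (for a $b$-positive) and by $0$ for negatives or forced boundary moves, and conclude inductively that the greedy rule, which always has a move keeping the difference within $\max(1/n_{1}^{a},1/n_{1}^{b})$, never lets it escape that band. The obstacle you anticipate is not real: the quantity the greedy rule minimizes as $\lambda\to\infty$ is the term $\lambda k\,|\mathrm{xAUC}(a,b)-\mathrm{xAUC}(b,a)|$ from Eq.~(9), which is already the normalized difference $|f-g|$ (the appendix's unnormalized-looking $\widehat{H}$ is just inconsistent notation, as its comparison against $1/n_{1}^{a}$ shows), so no rescaling between the $\widehat{H}$ coordinate and $f-g$ is needed.
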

\begin{proof}
We will use mathematical induction to prove the lemma. As $i = 0$ and $j = 0$, without loss of generality, we assume $\widehat{\operatorname{G}}(o^*(\operatorname{p}^{a(:0)}, \operatorname{p}^{b(:0)})\textcircled{+} \operatorname{p}^{a(1)}) \geq \widehat{\operatorname{G}}(o^*(\operatorname{p}^{a(:0)}, \operatorname{p}^{b(:0)})\textcircled{+} \operatorname{p}^{b(1)})$, then following the update function in Eq.(\ref{eq:greedy}) we get $\widehat{\operatorname{G}}(o^*(\operatorname{p}^{a(:i)}, \operatorname{p}^{b(:j)}) \geq -\min(\frac{1}{n^a_1}, \frac{1}{n^b_1}) \geq -\max(\frac{1}{n^a_1}, \frac{1}{n^b_1})$. Without loss of generality, suppose there exists $\widehat{\operatorname{H}}(o^*(\operatorname{p}^{a(:i)}, \operatorname{p}^{b(:j)}) \leq \max(\frac{1}{n^a_1}, \frac{1}{n^b_1})$ while $\widehat{\operatorname{H}}(o^*(\operatorname{p}^{a(:i+1)}, \operatorname{p}^{b(:j)}) = \widehat{\operatorname{H}}(o^*(\operatorname{p}^{a(:i)}, \operatorname{p}^{b(:j)}) + \mathbb{I}\left[\mathrm{Y}_{\mathrm{p}^{a(i+1)}}=1\right] \cdot\left(\sum_{h: h \leq n^b} \mathbb{I}\left(\mathrm{Y}_{\mathrm{p}^{b(h)}}=0\right)\right) > \max(\frac{1}{n^a_1}, \frac{1}{n^b_1})$ after the next update. As $\operatorname{p}^{a(i+1)})$ is appended to $o^*(\operatorname{p}^{a(:i)}, \operatorname{p}^{b(:j)})$ following the update function Eq.(\ref{eq:greedy}), we can conclude that $\widehat{\operatorname{H}}(o^*(\operatorname{p}^{a(:i)}, \operatorname{p}^{b(:j)})>0$ and
\begin{equation}
\label{eq:lemma1}
\max(\frac{1}{n^a_1}, \frac{1}{n^b_1}) < \widehat{\operatorname{H}}(o^*(\operatorname{p}^{a(:i+1)}, \operatorname{p}^{b(:j)})) \leq -\widehat{\operatorname{G}}(o^*(\operatorname{p}^{a(:i)}, \operatorname{p}^{b(:j+1)})).
\end{equation}
As $0 \leq \widehat{\operatorname{H}}(o^*(\operatorname{p}^{a(:i)}, \operatorname{p}^{b(:j)})) \leq \max(\frac{1}{n^a_1}, \frac{1}{n^b_1})$, $0 \leq \mathbb{I} \left[ \operatorname{Y}_{\operatorname{p}^{b(j+1)}}=1\right] \cdot (\sum_{h: h > i}\mathbb{I} \left(\mathrm{Y}_{\operatorname{p}^{a(h)}}=0 \right)) \leq \max(\frac{1}{n^a_1}, \frac{1}{n^b_1})$, $-\widehat{\operatorname{G}}(o^*(\operatorname{p}^{a(:i)}, \operatorname{p}^{b(:j+1)})) = \left| \widehat{\operatorname{H}}(o^*(\operatorname{p}^{a(:i)}, \operatorname{p}^{b(:j+1)}))\right| = \left|\widehat{\operatorname{H}}(o^*(\operatorname{p}^{a(:i)}, \operatorname{p}^{b(:j)})) - \mathbb{I} \left[ \operatorname{Y}_{\operatorname{p}^{b(j+1)}}=1\right] \cdot (\sum_{h: h \leq n^a}\mathbb{I} \left(\mathrm{Y}_{\operatorname{p}^{a(h)}}=0 \right)) \right| \leq \max(\frac{1}{n^a_1}, \frac{1}{n^b_1})$. This violates the conclusion in Eq.(\ref{eq:lemma1}), thus we prove that $-\widehat{\operatorname{G}}(o^*(\operatorname{p}^{a(:n^a)}, \operatorname{p}^{b(:n^b)})) = \Delta \mathrm{x}\mathrm{AUC} \leq \max \left(\frac{1}{n_{1}^{a}}, \frac{1}{n_{1}^{b}}\right)$ using greedy forward update algorithm.
\end{proof}

\begin{lemma}
The upper bound of $\Delta \mathrm{x} \mathrm{AUC}$ achieved by greedy search forward algorithm is no less than \texttt{xorder}.
\label{lm:greedyproof}
\end{lemma}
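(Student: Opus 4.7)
The plan is to argue by induction along the lattice that \texttt{xOrder}'s stored path at every cell is no worse (in $|\widehat{H}|$) than greedy's path at the same cell, and in particular at the terminal cell $(n^a,n^b)$. Combined with Lemma~1, which caps greedy's $|\widehat{H}|$ by $\max(1/n_1^a,1/n_1^b)$, this will deliver the same upper bound for \texttt{xOrder} and hence the lemma.

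First I would record the key ``path-free'' identity already used inside the proof of Lemma~1: for any cross-group ordering $o$ ending at $(i,j)$, appending $\operatorname{p}^{a(i+1)}$ (resp.\ $\operatorname{p}^{b(j+1)}$) changes $\widehat{H}(o)$ by an increment $+\Delta_a(i,j)\ge 0$ (resp.\ $-\Delta_b(i,j)\le 0$) whose magnitude depends only on $(i,j)$ and the label of the appended element---not on $o$---and is in particular bounded by $\max(1/n_1^a,1/n_1^b)$. Consequently $\widehat{H}$ at the terminal cell is a function of the lattice route alone, and both algorithms become route-selection procedures on the common lattice that can be compared cell by cell.

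Next I would run the induction. Suppose greedy reaches $(i+1,j)$ from $(i,j)$ by appending $\operatorname{p}^{a(i+1)}$, so that $\widehat{H}(o^g_{(i+1,j)})=\widehat{H}(o^g_{(i,j)})+\Delta_a(i,j)$. The candidate inside \texttt{xOrder}'s DP coming from the same predecessor achieves $\widehat{H}(o^*_{(i,j)})+\Delta_a(i,j)$, and by Eq.~(8) \texttt{xOrder}'s stored value at $(i+1,j)$ has $|\widehat{H}|$ no larger than this candidate, since the DP also considers the orthogonal candidate from $(i+1,j-1)$ and returns the minimum-$|\widehat{H}|$ option. It then suffices to argue that the predecessor candidate is at least as good in $|\widehat{H}|$ as greedy's $\widehat{H}(o^g_{(i+1,j)})$.

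The main obstacle is exactly this last comparison, because $|\cdot|$ is not monotone under a common shift: $|h^*|\le|h^g|$ does not automatically give $|h^*+\Delta|\le|h^g+\Delta|$ when $h^*$ and $h^g$ have opposite signs. I would close this gap by strengthening the inductive hypothesis with a sign-compatibility clause, e.g.\ that $\widehat{H}(o^*_{(i,j)})$ lies between $0$ and $\widehat{H}(o^g_{(i,j)})$ at every cell of greedy's path. A short case analysis on the sign of $\widehat{H}(o^g_{(i,j)})$ and on whether the next step contributes $+\Delta_a$ or $-\Delta_b$ then shows the sandwich property propagates: whenever the predecessor candidate fails the inequality, the orthogonal candidate supplies a sign-compatible value that does satisfy it, and \texttt{xOrder}'s selection of the minimum-$|\widehat{H}|$ option preserves the bound. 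Iterating to $(n^a,n^b)$ completes the argument.
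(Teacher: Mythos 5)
Your overall route is the same as the paper's: induct along the greedy path, and at each cell use the fact that \texttt{xOrder}'s dynamic program keeps the better of two candidates, one of which extends the value stored at greedy's predecessor cell by the same path-free increment. You have also put your finger on exactly the step that the paper's own write-up passes over silently: from $|\widehat{H}(o^{*}_{(i,j)})|\le|\widehat{H}(o^{g}_{(i,j)})|$ one cannot in general conclude $|\widehat{H}(o^{*}_{(i,j)})+\Delta|\le|\widehat{H}(o^{g}_{(i,j)})+\Delta|$, since the two values may have opposite signs (e.g. $h^{*}=3$, $h^{g}=-3$, $\Delta=1$ gives $4>2$). Diagnosing that this is where the work lies is correct, and is sharper than the paper, which simply asserts the inequality ``holds due to our assumption.''

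However, your proposed repair is not yet a proof. The strengthened invariant ``$\widehat{H}(o^{*}_{(i,j)})$ lies between $0$ and $\widehat{H}(o^{g}_{(i,j)})$'' does not propagate by the argument you sketch, because \texttt{xOrder}'s stored value at the next cell may be the orthogonal candidate coming from a cell such as $(i+1,j-1)$ that greedy never visits, and your induction --- which runs only along greedy's path --- gives no control over that candidate's sign or magnitude. Schematically: if $h^{g}=5$, the predecessor candidate is $3$, and the orthogonal candidate is $-1$, then \texttt{xOrder} stores $-1$ and the sandwich is already broken; at the following step greedy appends a $b$-instance with decrement $3$ and lands at $2$, while the predecessor candidate lands at $-4$ with $|-4|>2$, so the comparison fails unless the orthogonal candidate again rescues it --- which is precisely the unproven assertion. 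Closing the gap would require either a two-dimensional invariant maintained over all lattice cells rather than only greedy's path, or a different comparison altogether (e.g.\ exhibiting, among the paths the DP dominates, one whose terminal $|\widehat{H}|$ is at most greedy's). As written, the claim that ``the orthogonal candidate supplies a sign-compatible value that does satisfy it'' is the lemma in disguise.
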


\begin{proof}
We also use mathematical induction to prove the lemma. We use $_{\#greedy}\widehat{\mathrm{G}}$ and $_{\#xorder}\widehat{\mathrm{G}}$ to represent the value of $\widehat{\mathrm{G}}$ defined in Eq.(\ref{eq:lambdainfinity}) obtained by \emph{greedy search forward algorithm} and \texttt{xorder}, respectively. As $i = 0$ and $j = 0$, there exists $0 = _{\#greedy}\widehat{\mathrm{G}}(o^*(\operatorname{p}^{a(:0)}, \operatorname{p}^{b(:0)})) \leq _{\#xorder}\widehat{\mathrm{G}}(o^*(\operatorname{p}^{a(:0)}, \operatorname{p}^{b(:0)})) = 0$. Suppose  $_{\#greedy}\widehat{\mathrm{G}}(o^*(\operatorname{p}^{a(:i)}, \operatorname{p}^{b(:j)})) \leq _{\#xorder}\widehat{\mathrm{G}}(o^*(\operatorname{p}^{a(:i)}, \operatorname{p}^{b(:j)}))$, without loss of generality, we assume $\operatorname{p}^{a(i+1)}$ is appended to $o^{*}(\operatorname{p}^{a(:i)}, \operatorname{p}^{b(:j)})$ following the update function Eq.(\ref{eq:greedy}). Now we will prove that $_{\#greedy}\widehat{\mathrm{G}}(o^*(\operatorname{p}^{a(:i+1)}, \operatorname{p}^{b(:j)})) \leq _{\#xorder}\widehat{\mathrm{G}}(o^*(\operatorname{p}^{a(:i+1)}, \operatorname{p}^{b(:j)}))$. From Eq.(8) in the main text, we get the following inequation:
\begin{equation}
\begin{aligned}
_{\#xorder}\widehat{\mathrm{G}}(o^*(\operatorname{p}^{a(:i+1)}, \operatorname{p}^{b(:j)})) &= \max \left( _{\#xorder}\widehat{\mathrm{G}}(o^*(\operatorname{p}^{a(:i)}, \operatorname{p}^{b(:j)}) \textcircled{+} \operatorname{p}^{a(i+1)}, _{\#xorder}\widehat{\mathrm{G}}(o^*(\operatorname{p}^{a(:i+1)}, \operatorname{p}^{b(:j-1)}) \textcircled{+} \operatorname{p}^{b(j)})  \right)\\
& \geq \max \left( _{\#greedy}\widehat{\mathrm{G}}(o^*(\operatorname{p}^{a(:i)}, \operatorname{p}^{b(:j)}) \textcircled{+} \operatorname{p}^{a(i+1)}, _{\#xorder}\widehat{\mathrm{G}}(o^*(\operatorname{p}^{a(:i+1)}, \operatorname{p}^{b(:j-1)}) \textcircled{+} \operatorname{p}^{b(j)})  \right)\\
&\geq _{\#greedy}\widehat{\mathrm{G}}(o^*(\operatorname{p}^{a(:i)}, \operatorname{p}^{b(:j)}) \textcircled{+} \operatorname{p}^{a(i+1)}\\
& \geq _{\#greedy}\widehat{\mathrm{G}}(o^*(\operatorname{p}^{a(:i+1)}, \operatorname{p}^{b(:j)}))
\end{aligned}
\end{equation}
where the $\geq$ in the second line holds due to our assumption that $_{\#greedy}\widehat{\mathrm{G}}(o^*(\operatorname{p}^{a(:i)}, \operatorname{p}^{b(:j)})) \leq _{\#xorder}\widehat{\mathrm{G}}(o^*(\operatorname{p}^{a(:i)}, \operatorname{p}^{b(:j)}))$. Summarizing the deduction above, we get that $_{\#greedy}\widehat{\mathrm{G}}(o^*(\operatorname{p}^{a(:n^a)}, \operatorname{p}^{b(:n^b)})) \leq _{\#xorder}\widehat{\mathrm{G}}(o^*(\operatorname{p}^{a(:n^a)}, \operatorname{p}^{b(:n^b)}))$. Recalling that $\widehat{\mathrm{G}}(o^*(\operatorname{p}^{a(:n^a)}, \operatorname{p}^{b(:n^b)})) = -\Delta \mathrm{x} \mathrm{AUC}$, we prove that the upper bound of $\Delta \mathrm{x} \mathrm{AUC}$ achieved by greedy search forward algorithm is no less than \texttt{xorder}.
\end{proof}

Combining the two lemmas above, we prove that \texttt{xorder} achieves that $\Delta \mathrm{x} \mathrm{AUC} \leq \max \left(\frac{1}{n_{1}^{a}}, \frac{1}{n_{1}^{b}}\right)$ as $\lambda$ approaches infinity. For the proposition that \texttt{xorder} achieves that $\Delta \mathrm{PRF} \leq \max \left(\frac{n_{0}^{b}}{n_{0} \cdot n_{1}^{a}}, \frac{n_{0}^{a}}{n_{0} \cdot n_{1}^{b}}\right)$ as $\lambda$ approaches infinity, we can prove it in a similar way.

\section{Implementation details}
\subsection{Data preprocess}
For the four fairness benchmark data sets, we use the preprocessed data sets from the resource of \cite{kallus2019fairness}, while three of them(COMPAS, Adult) are from the resource of \cite{friedler2019comparative}. All categorical variables will be encoded as one-hot features. For Framingham, we use all features as the input. For MIMIC-III data set, we preprocess the original data as in \cite{harutyunyan2019multitask} for each ICU admission. EHR data of 17 selected clinical variables from the first 48 hours are used to extract features. For all clinical variables, different statistics (mean, std, etc) of different time slices are extracted to form a vector with 714 features.

\subsection{Training process of model}

We first train a model without any fairness regularization to obtain the unadjusted result. The linear model is optimized by gradient descent with learning rate of 1.0 and tfco respectively. The model is trained for at most 100 epochs. If the training loss failed to reduce after 5 consecutive epochs, the training will be stopped. For bipartite rankboost model, the number of estimators is 50 and learning rate is 1.0, which are the same as in the experiments of \cite{kallus2019fairness}.

\noindent \textbf{Post-logit} With transformation function $f(x) = \frac{1}{1 + e^{-(\alpha x + \beta)}}$, We follow the same procedure in \cite{kallus2019fairness} to optimize empirical disparity($\Delta \mathrm{xAUC}$ or $\Delta \mathrm{PRF}$) with fixed $\beta = -2$. The value of $\alpha$ is chosen from $\left[0, 10\right]$. However, we find that this setting can not obtain equal $\mathrm{xAUC}$ or $\mathrm{PRF}$ in MIMIC data set which has not been used in \cite{kallus2019fairness}. So we change $\beta$  from $\left[-1, -3\right]$ and $\alpha$ from $\left[0, 10\right]$ when optimizing the empirical disparity on MIMIC data set.

\noindent \textbf{Corr-reg} For corr-reg, we train models of the same structure with various weights of fairness regularization. Since the correlation regularization is only an approximation of the pairwise ranking disparity, we use the corresponding ranking fairness metrics($\Delta \mathrm{xAUC}$ or $\Delta \mathrm{PRF}$) as the criterion to determine the range of weights. We initialize the weight to be 0 (equivalent to unadjusted) and increase it until the average ranking disparity on training data is lower than $0.0001$ or does not decrease in 2 consecutive steps. We also apply this strategy to \texttt{xOrder}.

\sen{\noindent \textbf{Opti-tfco} For opti-tfco, we use the implementation of \cite{narasimhan2020pairwise}. When considering the trade-off between utility and fairness in a bipartite ranking problem, the authors optimize a constrained problem which is defined as follows: the model is trained to maximize AUC under the constrain that disparity($\Delta \mathrm{xAUC}$ or $\Delta \mathrm{PRF}$) is smaller than hyper-parameter $\epsilon$. }

\subsection{Computing infrastructure and consumption}

We run \texttt{xOrder} on computer with Intel i7-9750H CPU(@2.6GHz $\times$ 6) and 16 gb RAM. We report the average running time of 10 evaluation runs in Table \ref{table:time_consumption}.

\begin{table}
    \small
    \caption{Time consumption of \texttt{xOrder}}
    \label{table:time_consumption}
    \centering
    \begin{tabular}{c|c|c|c|c|c}
    \hline
    Data set & COMPAS & Adult & Framingham & MIMIC-III & eICU \\
    \hline
    $n$ & 6,167 & 30,162 & 4,658 & 21,139 & 17,402\\
    time/sec & $3.27 \pm 0.44$ & $38.67 \pm 1.28$ & $2.11 \pm 0.41$ & $16.71 \pm 0.61$ & $12.44 \pm 0.53$\\
    \hline
    \end{tabular}
\end{table}

\begin{table}
    \footnotesize
    \caption{Summary of ranking fairness metrics on unadjusted result over 10 repeat experiments}
    \label{table:summary_unadjusted}
    \centering
    \begin{tabular}{c|c|c|c|c|c|c}
        \hline
        \multirow{2}{*}{Data set}
        & \multicolumn{2}{|c|}{Linear Model(gradient descent)}
        & \multicolumn{2}{|c|}{Linear Model(AUC)}&\multicolumn{2}{|c}{Bipartite Rankboost} \\
        \cline{2-7}
        & $\Delta \mathrm{xAUC}$ & $\Delta \mathrm{PRF}$ & $\Delta \mathrm{xAUC}$ & $\Delta \mathrm{PRF}$ & $\Delta \mathrm{xAUC}$ & $\Delta \mathrm{PRF}$ \\
        \hline
        COMPAS & $0.195 \pm 0.024$ & $0.119 \pm 0.015$ & $0.220 \pm 0.026$ & $0.138 \pm 0.019$ & $0.206 \pm 0.027$ & $0.134 \pm 0.019$ \\
        Adult & $0.089 \pm 0.011$ & $0.040 \pm 0.009$ & $0.086 \pm 0.010$ & $0.034 \pm 0.008$ & $0.055 \pm 0.016$ & $0.020 \pm 0.008$\\
        Framingham & $0.270 \pm 0.022$ & $0.124 \pm 0.018$ & $0.213 \pm 0.089$ & $0.117 \pm 0.067$ & $0.285 \pm 0.075$ & $0.136 \pm 0.038$\\
        MIMIC, mortality-gender & $0.050 \pm 0.012$ & $0.020 \pm 0.009$ & $0.072 \pm 0.018$ & $0.034 \pm 0.012$ & ${0.018 \pm 0.011}^{*}$ & ${0.010 \pm 0.010}^{*}$\\
        MIMIC, prolonged LOS-ethnicity & $0.043 \pm 0.008$ & $0.021 \pm 0.008$ & $0.034 \pm 0.012$ & ${0.010 \pm 0.008}^{*}$ & $0.025 \pm 0.010$ & $0.012 \pm 0.007$\\
        eICU, prolonged LOS-ethnicity & $0.088 \pm 0.018$ & $0.029 \pm 0.015$ & $0.076 \pm 0.017$ & $0.026 \pm 0.012$ & $0.083 \pm 0.029$ & $0.032 \pm 0.018$\\
        \hline
    \end{tabular}
\end{table}

\section{Additional Experiment Results}

\subsection{Ranking fairness analysis on unadjusted result}

We report the ranking fairness metrics($\Delta \mathrm{xAUC}$, $\Delta \mathrm{PRF}$) in Table \ref{table:summary_unadjusted}. Large $\Delta \mathrm{xAUC}$ and $\Delta \mathrm{PRF}$ are observed on the five benchmark data sets. We use t-test with p-value as $0.0001$ to evaluate whether the average $\Delta \mathrm{xAUC}$ and $\Delta \mathrm{PRF}$ do not equal to 0.We mark the results which do not pass the test with $*$ on the table. For MIMIC-III and eICU data set, disparities are significant with certain $\operatorname{Y}-\operatorname{A}$.

\subsection{Complete experiment results}
The complete results with model-metric combinations of liner model-$\Delta \mathrm{xAUC}$ optimized by gradient descent, liner model-$\Delta \mathrm{xAUC}$ optimized by tfco, linear model-$\Delta \mathrm{PRF}$ optimized by gradient descent, linear model-$\Delta \mathrm{PRF}$ optimized by tfco, bipartite rankboost-$\Delta \mathrm{xAUC}$ and bipartite rankboost-$\Delta \mathrm{PRF}$ are shown in Figure \ref{fig:lr_result_xauc_gd}, \ref{fig:lr_result_xauc_tfco}, \ref{fig:lr_result_prf_gd}, \ref{fig:lr_result_prf_tfco}, \ref{fig:rb_result_xauc} and \ref{fig:rb_result_prf} respectively. The source codes to reproduce these results on public data sets are at https://github.com/xOrder-code/xOrder. Most subfigures in Figure \ref{fig:lr_result_xauc_tfco}, \ref{fig:lr_result_prf_tfco} and \ref{fig:rb_result_xauc} have been discussed in the main text.

\begin{figure}[htbp]
    \centering
    \includegraphics[width=5.5in]{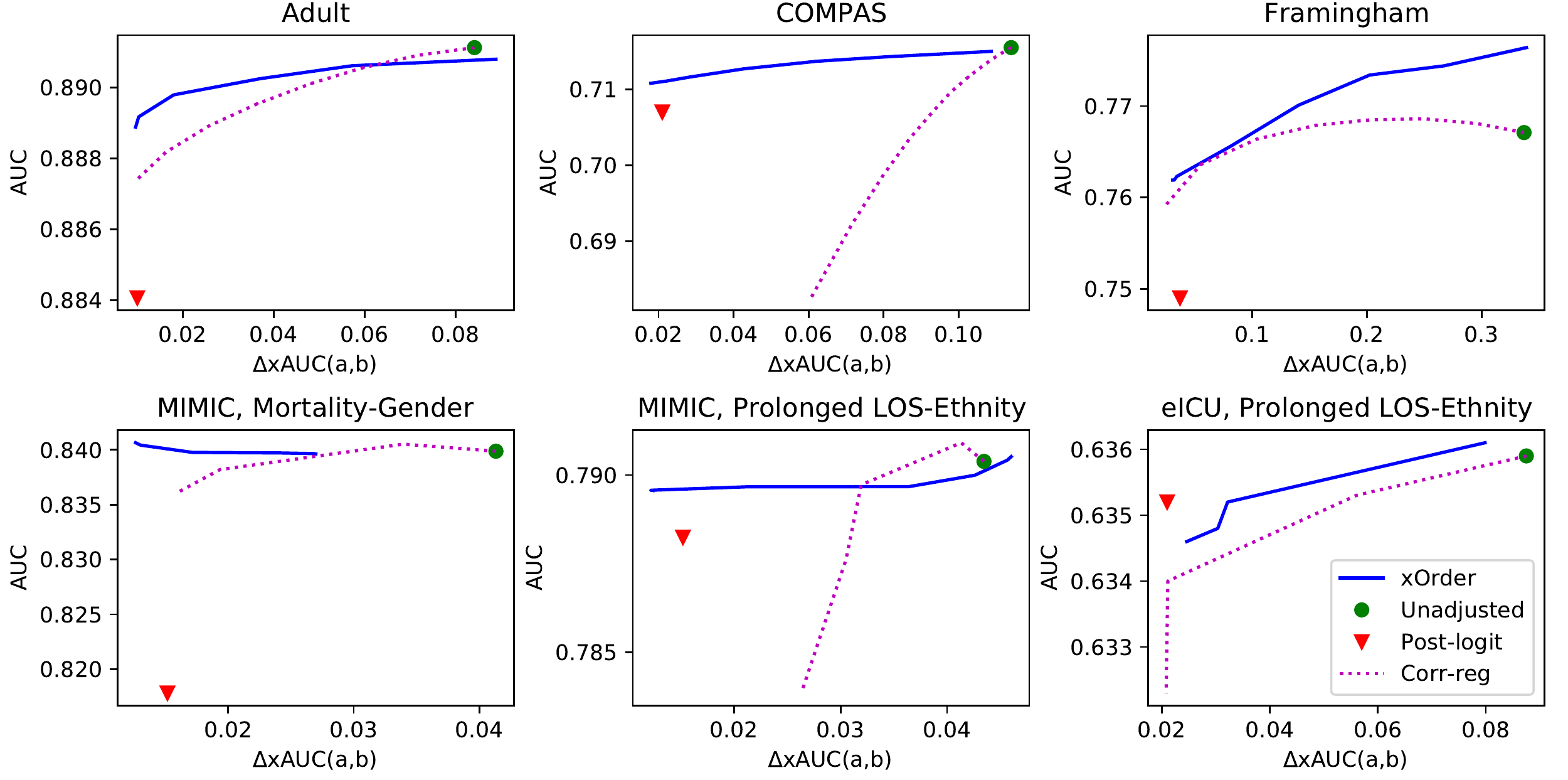}
    \caption{$\mathrm{AUC}$-$\Delta \mathrm{xAUC}$ with linear model trained by gradient descent.}
    \label{fig:lr_result_xauc_gd}
\end{figure}

\begin{figure}[htbp]
    \centering
    \includegraphics[width=5.5in]{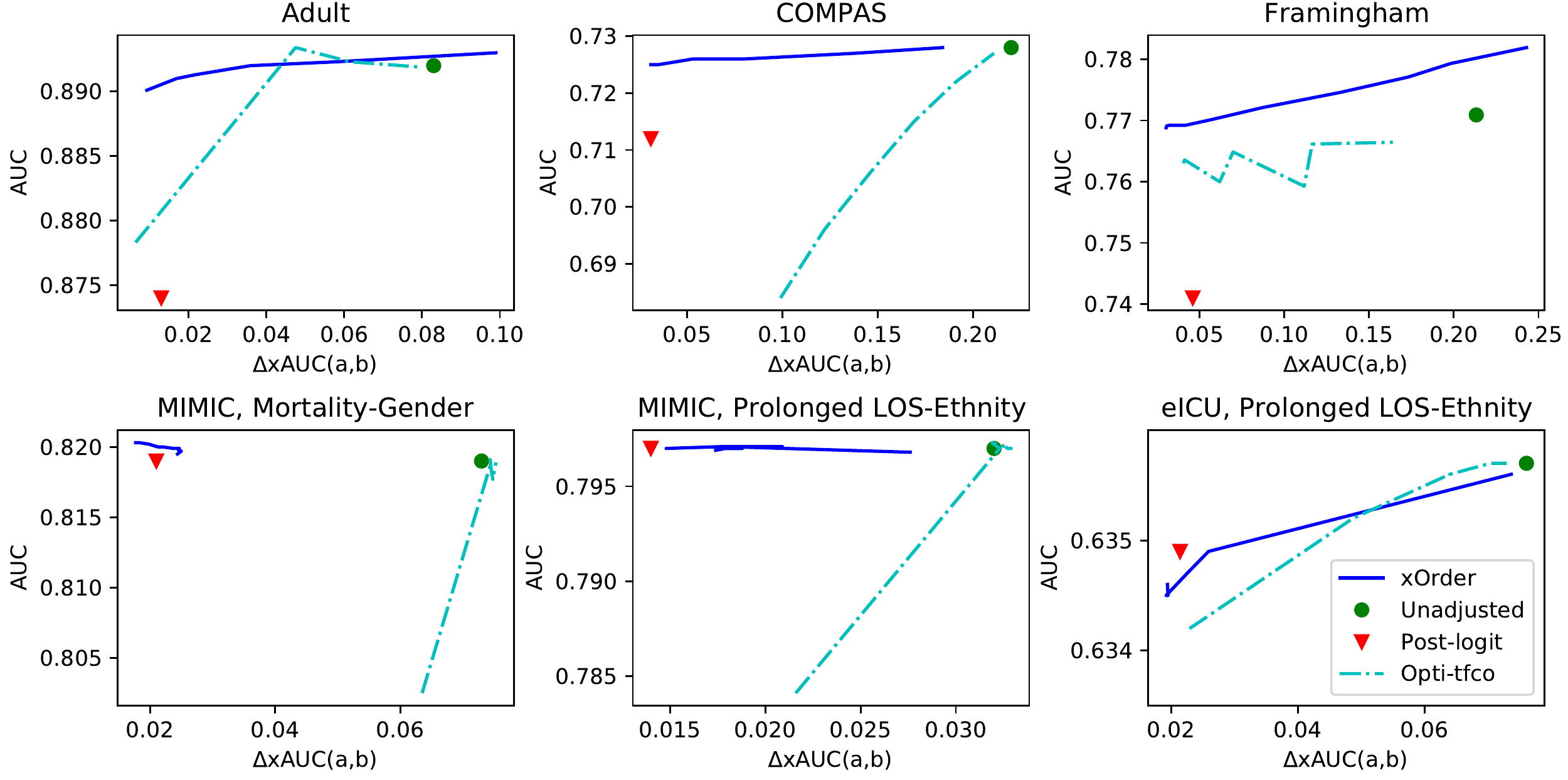}
    \caption{$\mathrm{AUC}$-$\Delta \mathrm{xAUC}$ with linear model trained by tfco.}
    \label{fig:lr_result_xauc_tfco}
\end{figure}

\begin{figure}[htbp]
    \centering
    \includegraphics[width=5.5in]{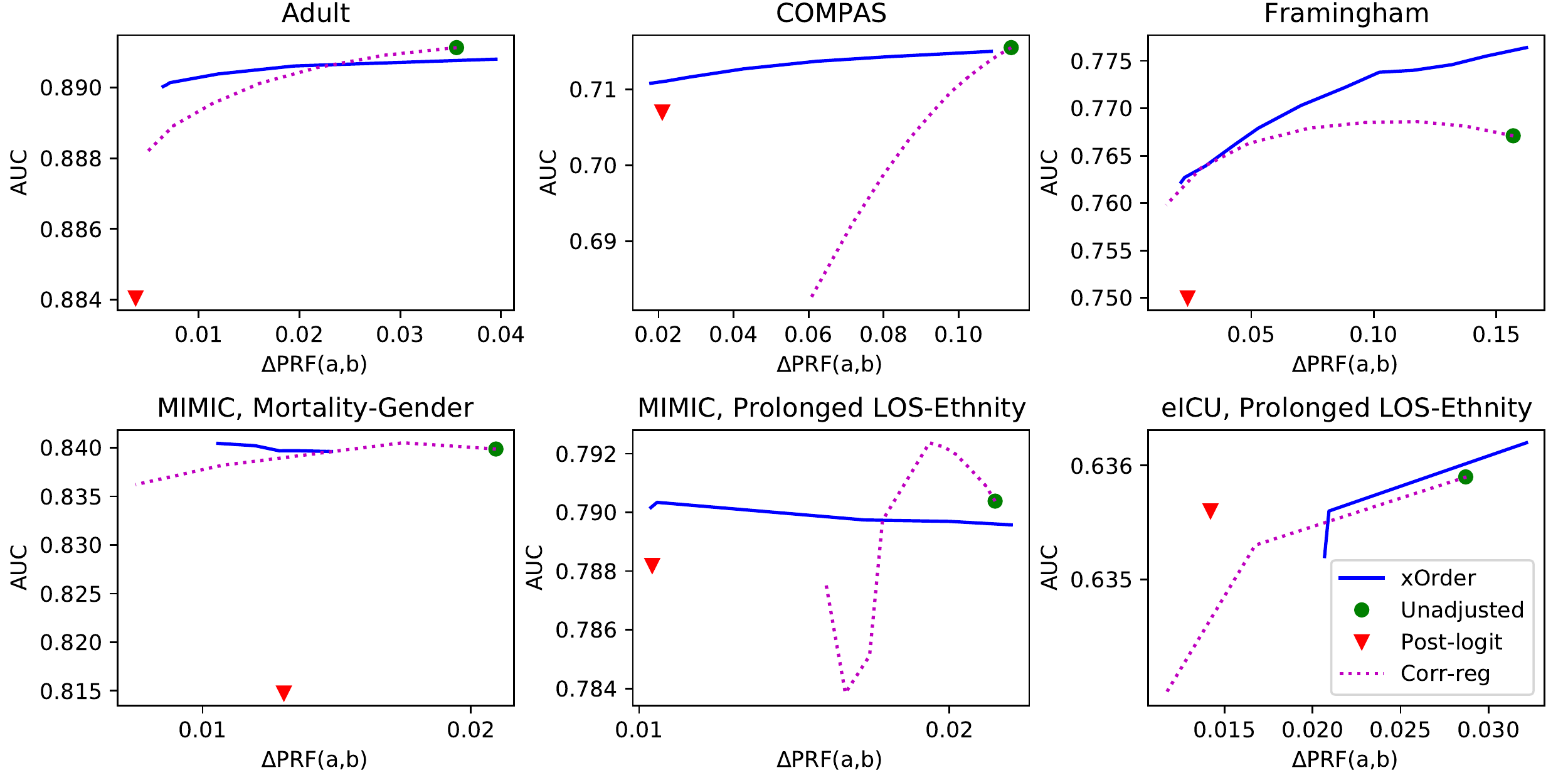}
    \caption{$\mathrm{AUC}$-$\Delta \mathrm{PRF}$ with linear model trained by gradient descent.}
    \label{fig:lr_result_prf_gd}
\end{figure}

\begin{figure}[htbp]
    \centering
    \includegraphics[width=5.5in]{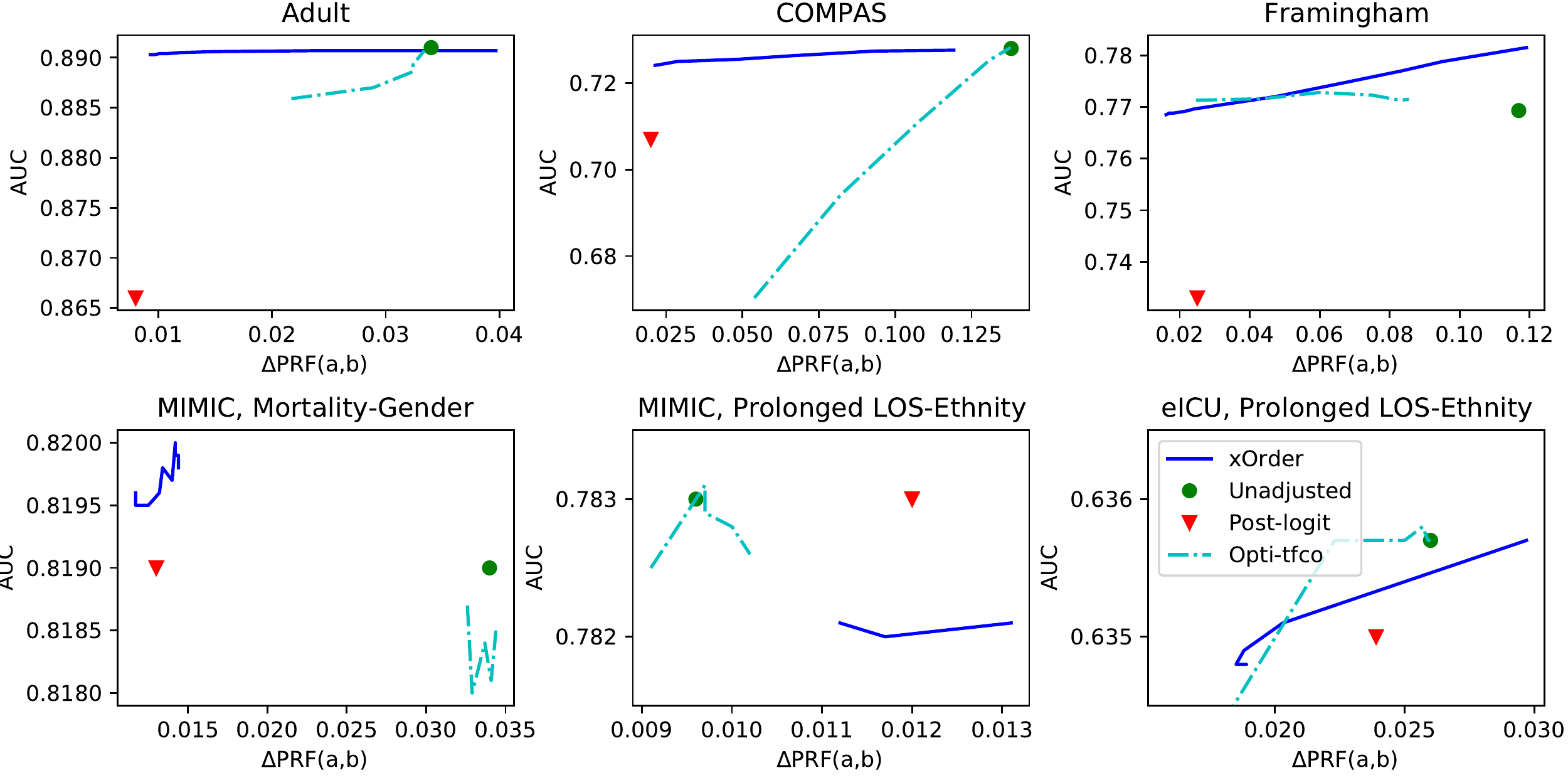}
    \caption{$\mathrm{AUC}$-$\Delta \mathrm{PRF}$ with linear model trained by tfco.}
    \label{fig:lr_result_prf_tfco}
\end{figure}

\begin{figure}[htbp]
    \centering
    \includegraphics[width=5.5in]{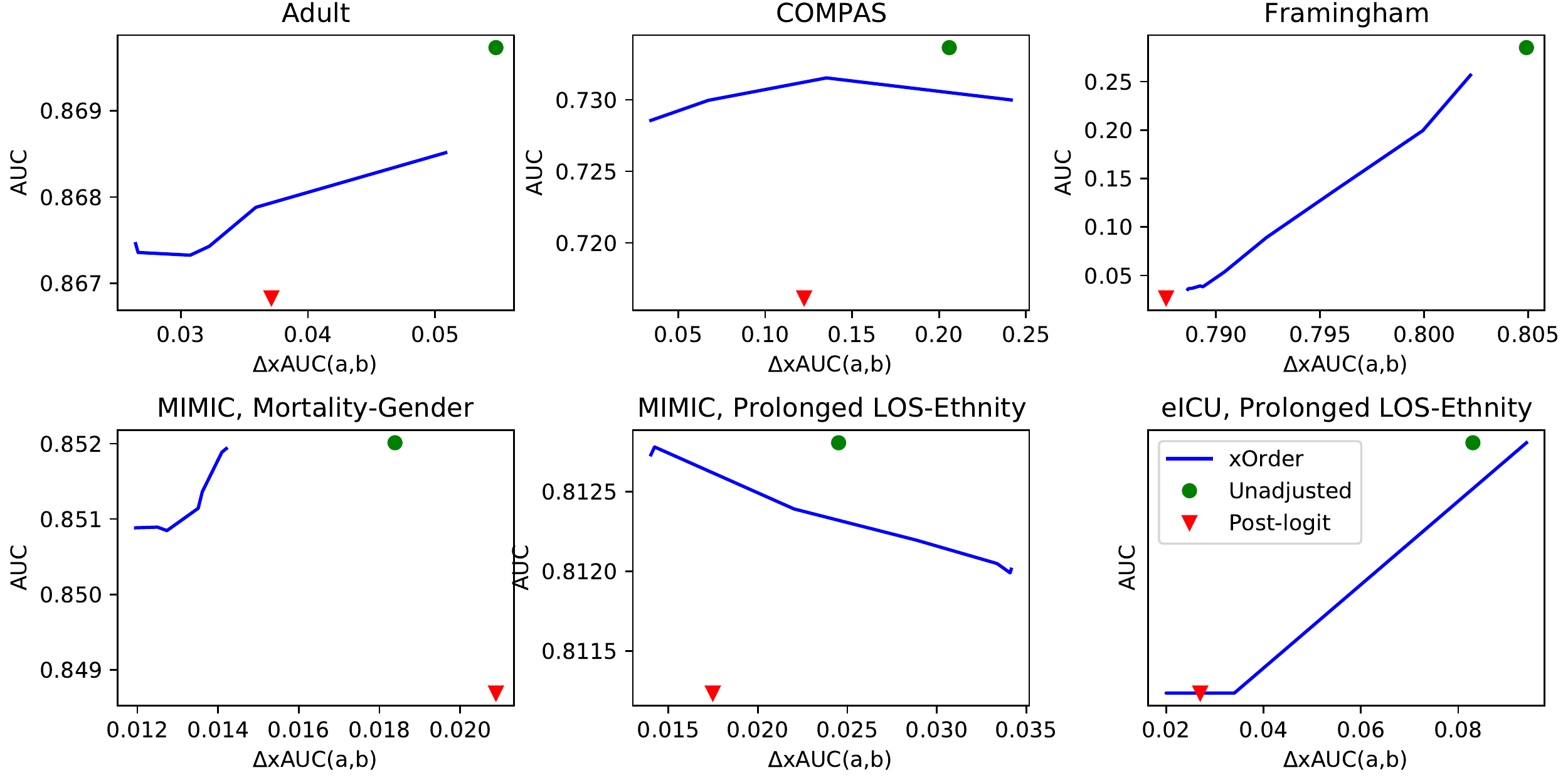}
    \caption{$\mathrm{AUC}$-$\Delta \mathrm{xAUC}$ with bipartite rankboost model.}
    \label{fig:rb_result_xauc}
\end{figure}

\begin{figure}[htbp]
    \centering
    \includegraphics[width=5.5in]{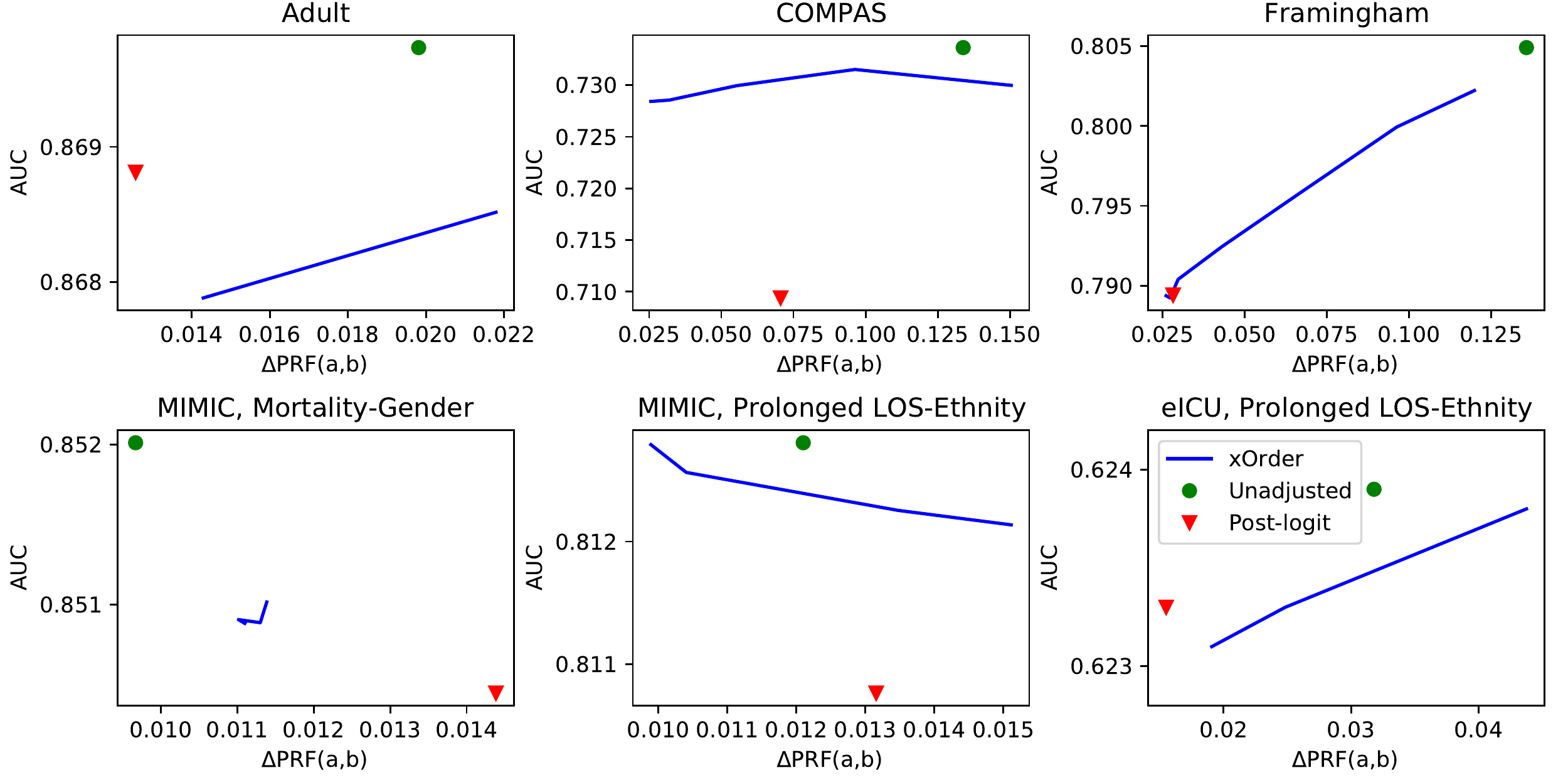}
    \caption{$\mathrm{AUC}$-$\mathrm{\Delta PRF}$ with bipartite rankboost model.}
    \label{fig:rb_result_prf}
\end{figure}

\end{document}